

\documentclass[letterpaper, 10 pt, conference]{ieeeconf} 

\IEEEoverridecommandlockouts                              

\overrideIEEEmargins                                      




\usepackage{fancyhdr}


\usepackage{xcolor}
\usepackage{comment}

\usepackage{cite}

\usepackage[mathcal]{euscript}

\usepackage[cmex10]{amsmath}
\usepackage{amssymb}

\usepackage{amsthm} 
  
\usepackage{dsfont} 

\usepackage{epsfig} 

\usepackage{multirow}
\usepackage{enumerate}
\usepackage{array}

\usepackage{enumitem}

\usepackage{diagbox} 
\usepackage{adjustbox} 

\usepackage[ruled, vlined, linesnumbered]{algorithm2e}
\SetAlCapFnt{\small}
\SetNlSty{}{}{} 
\SetInd{0.5em}{0.2em}
\SetAlgoSkip{9.5pt}



 
\newtheoremstyle{plain}
	  {}
	  {}
	  {\itshape}
	  {}
	  {\bfseries}
	  {}
	  {5pt plus 1pt minus 1pt}
	  {}

\newtheoremstyle{definition}
  	  {}
	  {}
	  {\normalfont}
	  {}
	  {\bfseries}
	  {}
	  {5pt plus 1pt minus 1pt}
	  {}
  
\theoremstyle{plain}
\newtheorem{assumption}{Assumption}

\newtheorem{lemma}{Lemma}
\newtheorem{proposition}{Proposition}

\theoremstyle{definition}

\newtheorem{remark}{Remark}


\newcommand{\refeq}[1]			{(\ref{#1})} 
\newcommand{\reffig}[1]			{Fig. \ref{#1}} 
\newcommand{\refsec}[1]			{Section \ref{#1}}

\newcommand{\reftab}[1]			{Table \ref{#1}}

\newcommand{\refprop}[1]		{Proposition \ref{#1}}

\newcommand{\reflem}[1]			{Lemma \ref{#1}}

\newcommand{\refasm}[1]			{Assumption \ref{#1}}
\newcommand{\refalg}[1]			{Algorithm \ref{#1}}

\newcommand{\refrem}[1]			{Remark \ref{#1}}



\theoremstyle{plain}


\newcommand{\R}  	{\mathbb{R}} 
\newcommand{\N}     {\mathbb{N}} 

\newcommand{\radius} 	{\rho}

\newcommand{\conv}      {\mathrm{conv}} 

\newcommand{\ovect}[1] {\begin{bmatrix}\cos #1\\ \sin #1 \end{bmatrix}}
\newcommand{\ovectsmall}[1] {\scalebox{0.85}{$\ovect{#1}$}}

\newcommand{\ovecT}[1] {\tr{\ovect{#1}\!}}

\newcommand{\ovecTsmalll}[1] {\scalebox{0.65}{$\ovecT{#1}$}}
\newcommand{\nvect}[1] {\begin{bmatrix}-\sin #1\\ \cos #1 \end{bmatrix}}

\newcommand{\nvecT}[1] {\tr{\nvect{#1}\!}}

\newcommand{\nvecTsmalll}[1] {\scalebox{0.65}{$\nvecT{#1}$}}



\newcommand{\motionset}{\mathcal{MP}} 

\newcommand{\workspace}	{\mathcal{W}} 
\newcommand{\map}     {\vect{M}} 
\newcommand{\occupiedspace}	{\mathcal{O}} 
\newcommand{\unknownspace}	{\mathcal{U}} 
\newcommand{\occupiedspaceunknown} {\occupiedspace_{\mathrm{real}}}
\newcommand{\freespaceunknown} {\freespace_{\!\mathrm{real}}}

\newcommand{\mappct}{\mathrm{MapPct}}

\newcommand{\clearance} {\varepsilon}

\newcommand{\freespace}	{\mathcal{F}} 

\newcommand{\planningfreespace} {\freespace_{\!\mathrm{plan}}}

\newcommand{\controlfreespace} {\freespace_{\!\mathrm{ctrl}}}

\newcommand{\erode} {\mathrm{erode}}

\newcommand{\frontierspace} {\mathcal{FR}} 
\newcommand{\frontierregion} {\mathcal{FR}} 
\newcommand{\frontierregionFunc} {\mathrm{frontierRegions}} 
\newcommand{\frontier} {\mathrm{f}} 

\newcommand{\info} {\mathrm{info}} 
\newcommand{\navcost} {\mathrm{navcost}} 
\newcommand{\mininfo}{\mu}

\newcommand{\probability} {\mathrm{p}} 
\newcommand{\freeprobability} {\probability_{\mathrm{free}}}
\newcommand{\occupiedprobability} {\probability_{\mathrm{occ}}}
\newcommand{\entropy} {\mathrm{entropy}} 
\newcommand{\volume}{\mathrm{volume}} 

\newcommand{\visitcost} {\mathrm{visitcost}}

\newcommand{\explorationPlan}{\mathrm{explorationPlan}}

\newcommand{\dist}      {\mathrm{dist}}

\newcommand{\distcoll} {\mathrm{dist2coll}} 
\newcommand{\distunknown}{\mathrm{dist2unkn}}

\newcommand{\ball}      {\mathrm{B}} 

\newcommand{\path}      {\mathrm{p}}

\newcommand{\travelcost} {\mathrm{travelcost}}
\newcommand{\optimalpath}      {\mathrm{optpath}}

\newcommand{\pathparam} {s}

\newcommand{\smin}      {0}
\newcommand{\smax}      {1}


\newcommand{\getmap} {\mathrm{getCurrentMap}}
\newcommand{\getState} {\mathrm{getCurrentState}}

\newcommand{\isinformative} {\mathrm{isInformative}}
\newcommand{\iscomplete} {\mathrm{isComplete}}

\newcommand{\isnear}{\mathrm{isNear}}

\newcommand{\treplan} {\Delta t}

\newcommand{\pos} 		{\vect{x}} 			
\newcommand{\ort}	    {\theta}			

\newcommand{\linvel}     {v}  
\newcommand{\angvel}     {\omega} 


\newcommand{\lingain}   	{\kappa_{v}} 			
\newcommand{\anggain}   	{\kappa_{\omega}} 			


\newcommand{\goal}		{\vect{x}^*} 

\newcommand{\gain}   	{\kappa} 			
\newcommand{\ctrl}      {\vect{u}} 			


\newcommand{\sensorrange}{r_{\mathrm{max}}} 



\newcommand{\safelevel}{\sigma} 









\newcommand{\viewpointset} {\mathrm{V}}
\newcommand{\viewpoint} {\vect{v}}

\newcommand{\visiblefrontiers}{\Lambda}
\newcommand{\visibletolerance}{\eta}

\newcommand{\view} {\mathrm{view}}


\let\originalleft\left
\let\originalright\right
\renewcommand{\left}{\mathopen{}\mathclose\bgroup\originalleft}
\renewcommand{\right}{\aftergroup\egroup\originalright}
	
\newcommand{\plist}[1] 	{\left(#1\right)} 
\newcommand{\blist}[1]	{\left[ #1 \right]} 
\newcommand{\clist}[1]	{\left\{#1\right\}} 

\DeclareMathOperator{\atantwo}{atan2}

\newcommand{\vect}[1]   {\mathrm{#1}}

\newcommand{\tr}[1] {{#1}^{\mathrm{T}}} 
\newcommand{\norm}[1]  {\|#1\|}
\newcommand{\absval}[1]{\left|#1 \right|} 

\newcommand{\ldf}   {:=} 

\newcommand{\argmin}{\operatornamewithlimits{arg\ min}} 
\newcommand{\argmax}{\operatornamewithlimits{arg\ max}} 
\newcommand{\diff} {\mathrm{d}}

\makeatletter
\DeclareRobustCommand\widecheck[1]{{\mathpalette\@widecheck{#1}}}
\def\@widecheck#1#2{%
    \setbox\z@\hbox{\m@th$#1#2$}%
    \setbox\tw@\hbox{\m@th$#1%
       \widehat{%
          \vrule\@width\z@\@height\ht\z@
          \vrule\@height\z@\@width\wd\z@}$}%
    \dp\tw@-\ht\z@
    \@tempdima\ht\z@ \advance\@tempdima2\ht\tw@ \divide\@tempdima\thr@@
    \setbox\tw@\hbox{%
       \raise\@tempdima\hbox{\scalebox{1}[-1]{\lower\@tempdima\box
\tw@}}}%
    {\ooalign{\box\tw@ \cr \box\z@}}}
\makeatother

\title{\LARGE \bf
Action-Aware Pro-Active Safe Exploration for Mobile Robot Mapping
}

\author{Aykut \.{I}\c{s}leyen and René van de Molengraft and \"{O}m\"{u}r Arslan
\thanks{The authors are with the Department of Mechanical Engineering, Eindhoven University of Technology, P.O. Box 513, 5600 MB Eindhoven, The Netherlands.
The authors are also affiliated with the Eindhoven AI Systems Institute.
Emails:  \{a.isleyen, m.j.g.v.d.molengraft, o.arslan\}@tue.nl}%
}

\begin{document}

\maketitle
\thispagestyle{empty}
\pagestyle{empty}

\begin{abstract}

Safe autonomous exploration of unknown environments is an essential skill for mobile robots to effectively and adaptively perform environmental mapping for diverse critical tasks. 
Due to its simplicity, most existing exploration methods rely on the standard frontier-based exploration strategy, which directs a robot to the boundary between the known safe and the unknown unexplored spaces to acquire new information about the environment.
This typically follows a recurrent persistent planning strategy, first selecting an informative frontier viewpoint, then moving the robot toward the selected viewpoint until reaching it, and repeating these steps until termination.
However, exploration with persistent planning may lack adaptivity to continuously updated maps, whereas highly adaptive exploration with online planning often suffers from high computational costs and potential issues with livelocks.
In this paper, as an alternative to less-adaptive persistent planning and costly online planning, we introduce a new proactive preventive replanning strategy for effective exploration using the immediately available actionable information at a viewpoint to avoid redundant, uninformative last-mile exploration motion.
We also use the actionable information of a viewpoint as a systematic termination criterion for exploration.
To close the gap between perception and action, we perform safe and informative path planning that minimizes the risk of collision with detected obstacles and the distance to unexplored regions, and we apply action-aware viewpoint selection with maximal information utility per total navigation cost. 
We demonstrate the effectiveness of our action-aware proactive exploration method in numerical simulations and hardware experiments. 
\end{abstract}

\section{Introduction}
\label{sec.Introduction}

Safe exploration of a priori unknown environments is an essential skill for autonomous mobile robots to adaptively and effectively map their surroundings \cite{stachniss_RoboticMappingExploration2009} in order to perform critical tasks such as search and rescue \cite{calisi_etal_ISRR2005}, surveillance \cite{dipaola_IJARS2010}, environmental monitoring \cite{ghaffari_etal_IJRR2019}, and planetary exploration \cite{otsu_agah-mohammadi_paton_RAL2018}.
Due to its simplicity and effectiveness in practice, frontier-based exploration \cite{yamauchi_CIRA1997} is a widely adopted approach for autonomous robotic exploration and mapping, guiding a mobile robot toward frontier locations at the boundary between known safe and unknown unexplored regions to gather new information about the environment.
The frontier-based exploration typically follows a recurrent persistent planning strategy, first selecting an informative next-best frontier viewpoint based on a weighted combination of information-theoretic and travel-distance heuristics, then safely moving the robot toward the selected viewpoint until reaching it, and repeating these exploration steps until a termination condition (e.g., maximum exploration time or distance, or minimum frontier size) is met \cite{placed_etal_TRO2023,lluvia_etal_Sensors2021}.
However, exploration with persistent planning may lack adaptivity to continuously updated maps during robot motion, whereas highly adaptive exploration with online planning often suffers from high computational costs and potential issues with livelocks where the robot continuously switches between different exploration goals without making progress.
Moreover, although robotic exploration is inherently an integrated perception and action task, most existing approaches primarily treat it as a perception-driven high-level decision-making problem, overlooking the importance of low-level planning and control for safe exploration and execution in unknown environments~\cite{placed_etal_TRO2023}.

\begin{figure}[t]
    \centering
        \includegraphics[width = 0.9835\columnwidth]{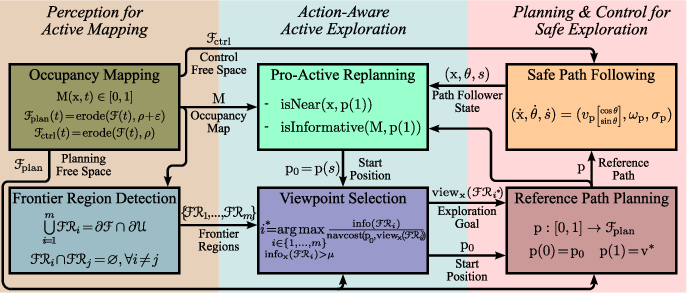} \\[0.5mm]
    \begin{tabular}{@{}c@{\hspace{0.005\columnwidth}}c@{\hspace{0.005\columnwidth}}c@{}}
        \includegraphics[width = 0.38\columnwidth]{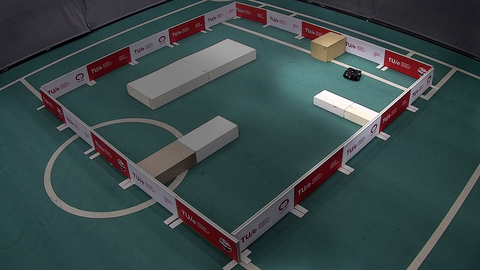} &
        \includegraphics[width = 0.38\columnwidth]{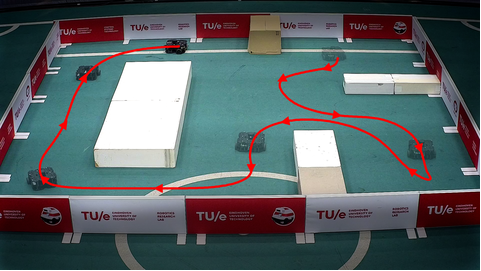} &
        \includegraphics[width = 0.2135\columnwidth]{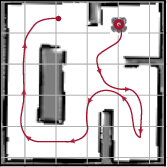} 
    \end{tabular}
    \vspace{-2mm}
\caption{
Action-aware proactive exploration incrementally maps an unknown environment by using safe and informative paths toward the best frontier viewpoints maximizing information utility per total navigation cost, along with systematic preventive replanning based on the immediately available actionable information of viewpoints to avoid redundant last-mile motion. (top) The elements of the action-aware proactive exploration framework. (bottom) An example of safe autonomous robotic exploration with the resulting occupancy map and the robot's exploration trajectory.
  }
    \label{fig.framework}
    \vspace{-\baselineskip}
    \end{figure}

In this paper, to bridge the gap between perception and action in robotic exploration, we propose a verifiably complete and safe  action-aware proactive exploration framework that systematically combines safe and informative path planning, action-aware informative viewpoint selection, proactive replanning, and safe path-following control, as illustrated in \reffig{fig.framework}.
In particular, we introduce a new proactive preventive replanning strategy for effective exploration, leveraging the immediately available actionable information at a viewpoint to avoid potential redundant, uninformative last-mile motion at the end of an exploration plan.
For safe and informative motion planning, we construct optimal maximal clearance informative paths that minimize the risk of collision with detected obstacles and the distance to unexplored regions along the path.
For action-aware frontier viewpoint selection, we choose a safely reachable frontier viewpoint candidate with maximal information content per total navigation cost.

\subsection{Motivation and Relevant Literature}

Robotic exploration for active mapping consists of three major elements: perception for exploration (e.g., occupancy mapping and frontier detection),  viewpoint selection for exploration (e.g., the information utility and navigation cost of frontier viewpoints), and  safe and informative planning and control for exploration, which are mostly performed in a sequential manner but may also be combined together\,\mbox{\cite{placed_etal_TRO2023,lluvia_etal_Sensors2021}.}

\subsubsection{Perception for Exploration}

Autonomous robotic exploration for active mapping aims to leverage the motion of a mobile robot with onboard sensing (e.g., camera or LiDAR) to incrementally build a globally consistent and accurate representation of a priori unknown spatial information about the environment (e.g., occupancy maps with obstacles) \cite{placed_etal_TRO2023, lluvia_etal_Sensors2021}.
Perception for exploration mainly focuses on map representation and viewpoint detection.
Due to their simplicity, occupancy grid maps are a widely used probabilistic metric world representation, serving as a fine tessellation of environments into simple grid shapes, and also allowing for easy classification of the space into obstacle-free, obstacle-occupied, and unknown regions via simple probability thresholding \cite{elfes_Computer1989, thrun_AR2003, thrun_probabilistic2005}.
Over such a metric world representation, all locations are potential viewpoints to gather new information about the environment to increase mapping quality (e.g., by decreasing map uncertainty),
however, this often results in an intractable optimization and search problem for determining the best robot motion and action. \cite{stachniss_RoboticMappingExploration2009}.  
This computational challenge is often mitigated by focusing the search on the more promising locations, known as frontiers, at the boundary of the known obstacle-free and unknown regions \cite{yamauchi_CIRA1997}.
Over an occupancy grid map, frontier locations can be efficiently detected using standard image processing techniques (e.g., edge detection) and maintained incrementally \cite{keidar_kaminka_IJRR2014}. 
Randomized frontier detection is also applied to effectively handle high-dimensional and complex environments \cite{umari_mukhopadhyay_IROS2017}, and the notion of frontiers naturally extends to hybrid topological-metric maps, built as a pose graph of submaps \cite{grisetti_etal_MITS2010, latha_arslan_arXiv2024}, as well as to semantic maps \cite{gomez_etal_Sensors2019, kim_RAL2022}.
To further reduce the search space, frontiers are usually clustered into coherent, connected regions and assigned a representative point (e.g., centroid) using connected-component analysis \cite{yamauchi_CIRA1997}, k-means clustering \cite{lu_redondo_campoy_Sensors2020}, or mean-shift clustering \cite{umari_mukhopadhyay_IROS2017}.
A commonly known challenge with the representative center of a frontier region is that it may be unsafe to reach (e.g., in unknown or occupied space) or safe but unreachable \cite{uslu_INISTA2015}.
In this paper, we limit the search for the best viewpoint selection to safely reachable viewpoints of frontier regions  that allow a clear and reliable visibility of frontiers within the robot's sensing range.

\subsubsection{Viewpoint Selection for Exploration}

The search for the best viewpoint selection begins by assessing the quality of potential viewpoint candidates based on perception-driven information-theoretic utility measures and action-driven navigation costs \cite{placed_etal_TRO2023}.
The perception-driven information utility of a viewpoint is often measured by the entropy of the surrounding space on the map \cite{bourgault_etal_IROS2002, stachniss_grisetti_burgard_RSS2005, charrow_etal_RSS2015} or the size of the nearby unknown space \cite{connolly_ICRA1985, gonzalez-banos_latombe_IJRR2002}. 
Learning-based prediction of the information utility of viewpoints has also been applied to support robotic exploration through deep learning \cite{shrestha_etal_ICRA2019, saroya_best_hollinger_IROS2020}, reinforcement learning \cite{niroui_etal_RAL2019}, and Gaussian processes \cite{bai_etal_IROS2016}.
Meanwhile, the action-driven navigation cost of a viewpoint relative to the robot's position is usually determined based on the standard Euclidean distance \cite{yamauchi_CIRA1997}, shortest path (i.e., geodesic) distance \cite{julia_gil_reinoso_AR2012}, travel time \cite{zhou_etal_RAL2021}, energy consumption \cite{mei_lu_lee_hu_ICRA2006}, and orientational alignment \cite{cieslewski_etal_IROS2017}.
As expected, low-quality, simplistic viewpoint selection, based solely on perception-driven information utility or straight-line Euclidean navigation cost, often results in inefficient exploration motion, with significant back-and-forth movement between different viewpoints \cite{julia_gil_reinoso_AR2012}. 
This is because purely information-driven viewpoint selection ignores the required robot motion, as if the robot teleports, whereas simplistic action models (e.g., the straight-line paths) overlook the influence of obstacles on robot motion, creating a significant disparity between high-level decision-making and low-level planning and control, which results in poor performance during the execution of exploration.
Hence, additive, multiplicative, and normalized combinations of information utility and navigation cost are often preferred to improve exploration efficiency \cite{gonzalez-banos_latombe_IJRR2002, stachniss_RoboticMappingExploration2009, dai_etal_ICRA2020}, but this often requires careful parameter and weight tuning \cite{placed_etal_TRO2023, gomez_etal_Sensors2019}.
In this paper, we consider a new action-aware viewpoint selection strategy that maximizes the information utility per total navigation cost, where the navigation cost is determined along safe and informative paths that minimize the risk of collision with detected obstacles and the distance to unexplored space.

\subsubsection{Planning and Control for Exploration}

Once the best viewpoint for exploration is selected as a high-level exploration goal, the safe and successful execution of exploration requires a systematic low-level planning and control approach to avoid collisions with unseen obstacles, and prevent deadlocks (where the robot becomes stuck for extended periods) and livelocks (where the robot continuously switches between viewpoints without making progress).
To avoid livelocks at the cost of reduced adaptability to continuously updated maps, autonomous exploration usually adopts a persistent planning strategy to safely move the robot toward the selected viewpoint until it is reached, and then exploration continues with the next best viewpoint \cite{placed_etal_TRO2023, lluvia_etal_Sensors2021} until a termination condition (e.g., maximum exploration time or distance, or minimum frontier size) is met \cite{placed_IFAC2022}.
Many existing studies delegate the execution of exploration with persistent planning to off-the-shelf software packages (e.g., the ROS navigation stack) \cite{gao_etal_ICARCV2018, gomez_etal_Sensors2019, umari_mukhopadhyay_IROS2017, strom_RAS2017, kim_RAL2022}, but these tools require careful and complex parameter tuning for control, local and global planning, and recovery behaviors to ensure proper and reliable robot operation \cite{zheng_Spring2021}.
As a result, to avoid deadlocks due to the discrepancy between exploration planning and navigation planning, they often necessitate frequent, computationally costly replanning for navigation in unknown environments \cite{koenig_TRO2005, tordesillas_TRO2021}.
A potential solution to robust and reliable exploration execution is exploration with online planning based on integrated receding-horizon planning and control  \cite{bircher_ICRA2016, dharmadhikari_etal_ICRA2020, lindqvist_etal_TRO2024, hollinger_sukhatme_IJRR2014}.
However, such integrated optimal planning and control approaches inherit both adaptability and reliability, as well as the high computational cost and potential issues with local minima and livelocks in online planning \cite{lluvia_etal_Sensors2021}, and are often combined with global persistent planning to overcome these issues \cite{charrow_etal_RSS2015}.
Hence, to take advantage of the computational benefits of persistent planning with safe and reliable execution of exploration, we adopt a verifiably correct and safe path-following control approach using feedback motion prediction and online path time parameterization for nonholonomically constrained unicycle mobile robots \cite{arslan_arXiv2022, isleyen_vandewouw_arslan_IROS2023, isleyen_vandewouw_arslan_cdc2023}.
Last but not least, the termination of an exploration plan or the entire exploration process is an important decision in autonomous exploration. An exploration plan is often terminated if it is not safely executable due to an unreachable viewpoint selection \cite{yamauchi_CIRA1997} or if the viewpoint has no frontier neighbor \cite{holz_etal_ISR2010}, whereas the entire exploration is stopped based on the maximum allowed exploration time or distance, minimum frontier size, or map entropy \cite{placed_IFAC2022}.
In this paper,  we introduce a new notion of immediately available actionable information of a viewpoint as a systematic termination criterion for exploration and we propose a proactive, preventive replanning strategy for effective exploration based on the actionable information of a viewpoint to avoid redundant, uninformative last-mile exploration motions around it, see \reftab{tab.ExplorationStrategies}.

\begin{table}
    \caption{Planning Strategies for Active Exploration}
    \label{tab.ExplorationStrategies}
    \vspace{-3mm}
    \centering
    \resizebox{\columnwidth}{!}{
    \begin{tabular}{@{}c@{\hspace{1mm}}c@{}}
    \hline 
    \hline
    \\[-2mm]
    \multicolumn{1}{@{}c@{}}{\textbf{Exploration Strategy}} & \multicolumn{1}{c}{\textbf{Exploration Procedure}}                                                                         \\[1mm] 
    \hline
    \\[-2mm]
    \parbox{0.36\columnwidth}{\centering Persistent Planning}
    &
    \parbox{0.75\columnwidth}{Plan an exploration path toward a selected viewpoint and persistently follow it until the viewpoint is reached.}
    \\[2mm]
    \hline
    \\[-2mm]
    \parbox{0.36\columnwidth}{\centering Online Planning}
    &
    \parbox{0.75\columnwidth}{Continuously (re)plan and follow an exploration path toward the current best viewpoint.}
    \\[2mm]
    \hline
    \\[-2mm]
    \parbox{0.36\columnwidth}{\centering Last-Mile\\ 
    Preventive Planning}
    &
    \parbox{0.75\columnwidth}{Plan an exploration path toward a selected viewpoint and follow it as long as the viewpoint remains informative; otherwise, replan an exploration path to a new viewpoint.}
    \\[3.5mm]
    \hline \hline
    \end{tabular}
    }
    \vspace{-1.5\baselineskip}
\end{table}
\subsection{Contributions and Organization of the Paper}

This paper presents an action-aware proactive exploration framework for safe and effective occupancy mapping of unknown environments by tightly and systematically coupling perception and action in three key steps of exploration, including safe and informative path planning, action-aware viewpoint selection, and actionable-information-based preventive replanning. 
In summary, our main contributions are:

\begin{itemize}

\item We describe how to construct an optimally safe and informative exploration plan that minimizes the risk of collision with obstacles as well as the distance to unexplored regions.

\item We propose an action-aware, safely reachable, and informative viewpoint selection strategy that maximizes information utility per total navigation cost.

\item We introduce a new notion of the immediately available actionable information of a viewpoint to terminate an exploration plan as a preventive replanning strategy and also to determine the completion of exploration. 
    
\item We describe how to safely execute an exploration plan without deadlocks and perform online replanning without livelocks by using safe and persistent path-following control for nonholonomic mobile robots.
 
\end{itemize}  

\noindent We demonstrate the effectiveness of our action-aware proactive exploration framework for mobile mapping in numerical ROS-Gazebo simulations and real physical hardware experiments by investigating the influence of information utility, navigation cost, and proactive replanning on exploration performance.
We observe that action-aware exploration with proactive replanning, taking into account the cost of navigation, significantly improves the rate and speed of exploration compared to information-only exploration strategies.
Because all informative frontier regions, irrespective of their information content, are eventually required to be observed to complete the mapping, making the cost of navigation more influential in robotic exploration than the information utility.



The rest of of the paper is organized as follows.  
\refsec{sec.perception_for_active_mapping} briefly presents the standard occupancy mapping and frontier region detection methods as the perception elements of active mapping.
\refsec{sec.planning_control_for_safe_exploration} describes how to optimally plan a safe and informative reference path for exploration, and how to safely and reliably execute it with a nonholonomic mobile robot.
\refsec{sec.action_aware_active_exploration} present our active-aware frontier viewpoint selection and pro-active last-mile-preventive replanning strategy for effective and fast exploration and mapping in unknown environments.
\refsec{sec.numerical_simulations_physical_experiments} demonstrates the influence of information utility and navigation cost on exploration, both in numerical simulations and physical hardware experiments.
\refsec{sec.conclusions} concludes with a summary of our work and results, as well as future research directions.

\section{Perception for Active Mapping}
\label{sec.perception_for_active_mapping}

In this section, we briefly present two widely used perception elements of active exploration for autonomous mapping: occupancy grid mapping as a metric world representation and frontier region detection as exploration target~selection.

\subsection{Occupancy Mapping}

For ease of exposition, we consider a disk-shaped mobile robot with a body radius $\radius \in \R_{>0}$, centered at position $\pos \in \workspace$, moving in an unknown two-dimensional static bounded workspace $\workspace \subset \R^2$, consisting of a priori unknown obstacles $\occupiedspaceunknown \subset \workspace$ and an unknown obstacle-free space $\freespaceunknown \ldf \workspace \setminus \occupiedspaceunknown$.
We assume that the robot's position $\pos(t)$ is known at all times $t \geq 0$ with perfect localization, and the robot is equipped with a 2D (e.g., LiDAR) range scanner centered at the robot's position $\pos$, with a fixed maximum sensing range $\sensorrange > \radius$ (greater than the robot's body radius $\radius$) and a $360^\circ$ angular scanning range.
Accordingly, we consider the availability of a probabilistic occupancy (grid) mapping method \cite{thrun_probabilistic2005} that uses the known robot's position and scan readings to continuously update a probabilistic occupancy map, denoted by $\map(\pos,t) \in \blist{0,1}$, of the workspace $\workspace$, using Bayesian filtering and ray casting, where $\map(\pos,t) \in [0,1]$ represents the occupancy likelihood of each position $\pos \in \workspace$ at time $t \geq 0$.
Based on the occupancy map probabilities $\map(\pos,t)$, the workspace $\workspace$ can be partitioned into three categorical regions: obstacle-free $\freespace(t)$, obstacle-occupied $\occupiedspace(t)$, and unknown $\unknownspace(t)$ spaces as
\begin{subequations}\label{eq.free_occupied_unknown_spaces}
\begin{align}
    &\freespace(t) \ldf \clist{ \pos \!\in\! \workspace \mid \map(\pos,t) \leq \freeprobability }, \label{eq.freespace}\\
    &\occupiedspace(t) \ldf \clist{ \pos \!\in\! \workspace \mid \map(\pos,t)  \geq \occupiedprobability }, \label{eq.occupiedspace}\\
    &\unknownspace(t) \ldf \clist{ \pos \!\in\! \workspace \mid \freeprobability < \map(\pos,t)  < \occupiedprobability }, \label{eq.unknownspace}
\end{align}
\end{subequations}
where $\freeprobability$ and $\occupiedprobability$, with $0 \leq \freeprobability \!<\! \occupiedprobability \leq 1$, are  fixed probability thresholds, describing whether a position $\pos \in \workspace$ is free or occupied, respectively.
Moreover, we use $\map(t)$ to refer to the complete occupancy map at time $t$.

We assume that the probabilistic occupancy map $\map(t)$ of the environment is updated in a consistent manner. 

\begin{assumption} \label{asm.consistent_occupancy_mapping} (Consistent Occupancy Mapping)
For any time $t \geq 0$, regardless of how the robot position $\pos(t)$ changes over time,  we assume the occupancy mapping satisfies that
\begin{itemize}
\item (Safe  Under-Approximation of Free Space) The continuously mapped free space $\freespace(t)$ is a subset and defines a safe under-approximation of the unknown actual free space $\freespaceunknown = \workspace \setminus \occupiedspaceunknown$ for all times, i.e.,
\begin{align}
\freespace(t) \!\subseteq\! \freespaceunknown  \quad \forall t \geq 0.
\end{align}
\item (Increasingly Inclusive Free \& Obstacle Spaces)~The continuously mapped obstacle-free and obstacle-occupied spaces, $\freespace(t)$ and $\occupiedspace(t)$, incrementally grow in an inclusive manner whereas the unknown space $\unknownspace(t)$ incrementally shrinks in an exclusive manner, i.e.,
\begin{align} \label{eq.freespace_expand}
\! \freespace(t') \!\supseteq\! \freespace(t), \!\quad\!
\occupiedspace(t') \!\supseteq\! \occupiedspace(t), \!\quad\!
\unknownspace(t') \!\subseteq\! \unknownspace(t) \!\quad\! \forall t' \!\geq t. \!\!\!
\end{align}
\item (Unknown Space Exploration) If there exists a visible unknown position $\vect{y} \in \unknownspace(t)$ within the robot's sensing range $\sensorrange$ from a safe robot position $\pos(t) \in \freespace(t)$, then the unknown space $\unknownspace(t)$ strictly shrinks over time, i.e.,
\begin{align}
\begin{array}{c}
\blist{\vect{y}, \pos(t)} \!\subset\! \workspace \setminus \occupiedspace(t), \\
\norm{\vect{y} \!-\! \pos(t)} \leq  \sensorrange
\end{array}
 \!\Rightarrow \unknownspace(t') \!\subset\! \unknownspace(t) \quad  \forall t' > t.\!\!
\end{align}
\end{itemize}
\end{assumption}

Under the assumption of accurate localization, e.g., using an external positioning system such as a motion capture system or global positioning system (GPS) or sensor-based onboard simultaneous localization and mapping methods \cite{thrun_probabilistic2005}, the consistent occupancy mapping requirement in \refasm{asm.consistent_occupancy_mapping} can be satisfied in many practical settings.
The close-loop nature of active mapping also reinforces the occupancy-consistent mapping assumption.
In this paper, we use a motion capture system in both simulations and experiments for accurate localization and apply \refasm{asm.consistent_occupancy_mapping} for the systematic and provably correct design of our action-aware safe mobile exploration framework.
Accordingly, for safe motion planning and control of a mobile robot with a finite body size, we separately construct the free spaces $\planningfreespace(t)$ and $\controlfreespace(t)$ for planning and control, by eroding the free space $\freespace(t)$ by the robot's body radius $\radius \!>\! 0$ and a fixed safe and smooth control tolerance $\clearance \!>\! 0$, as in \reffig{fig.over_under_approx}~as
\begin{align} 
\planningfreespace(t) &\ldf \erode(\freespace(t), \radius + \clearance), \label{eq.planningfreespace}
\\
\controlfreespace(t) &\ldf \erode(\freespace(t), \radius), \label{eq.controlfreespace}
\end{align}
where $\clearance $ is an additional safety clearance that facilitates continuous maneuvering and control around obstacles with (e.g., nonholonomic) motion constraints \cite{isleyen_vandewouw_arslan_IROS2023}, and the erosion of a set $\mathcal{A}$ by a circular ball of a radius $\radius\!>\!0$ is defined as
\begin{align} \label{eq.erosion}
    \erode(\mathcal{A}, \radius) \ldf \clist{ \pos \in \mathcal{A} \mid \ball(\pos, \radius) \subseteq \mathcal{A} },
\end{align}
where $\ball(\pos, \radius) \ldf \clist{ \vect{y} \in \R^2 \mid \norm{\vect{y} - \pos} \leq \radius }$ is the Euclidean closed ball centered at $\pos$ with radius $\radius\!\geq\!0$ and $\norm{\cdot}$ denotes the standard Euclidean norm.

\begin{remark}\label{rem.safe_planning_control_spaces}
\emph{(Safe Planing and Control Spaces)}
The safe planning space $\planningfreespace(t)$ is a subset of the safe control space $\controlfreespace(t)$, which in turn is a subset of the obstacle-free space $\freespace(t)$, i.e., $\planningfreespace(t) \subseteq \controlfreespace(t) \subseteq \freespace(t)$,  accounting for the robot's body size $\radius$ and the constrained motion gap $\clearance$ between planning and control, see \reffig{fig.over_under_approx}.
For position-controlled, point-sized robots, there is no distinction between the safe planning and control spaces, i.e., $\planningfreespace(t) = \controlfreespace(t) = \freespace(t)$.
\end{remark}

\begin{figure}[t]
\centering
\begin{tabular}{@{\hspace{0.5mm}}c@{\hspace{1mm}}c@{}}
\includegraphics[width = 0.485\columnwidth]{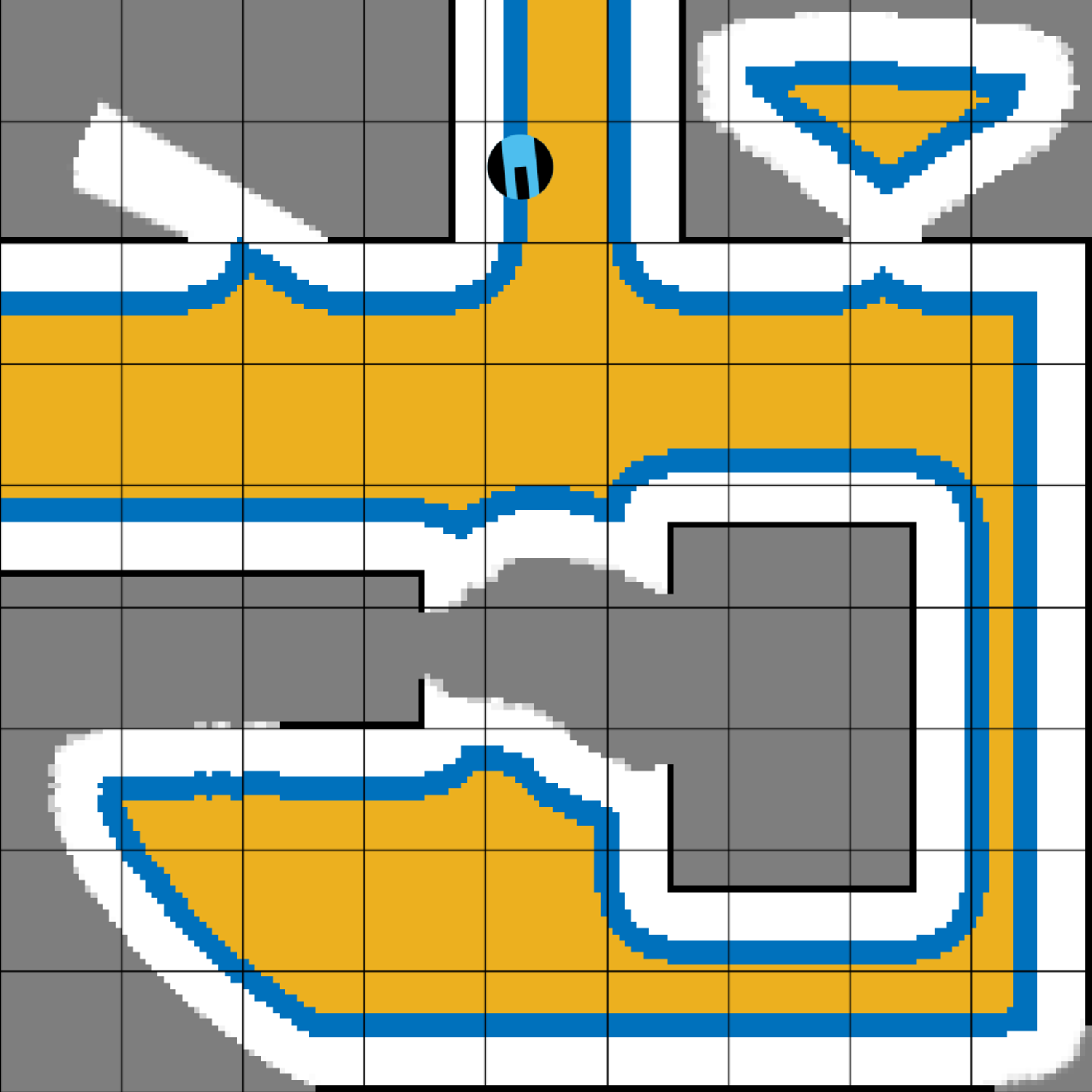} &
\includegraphics[width = 0.485\columnwidth]{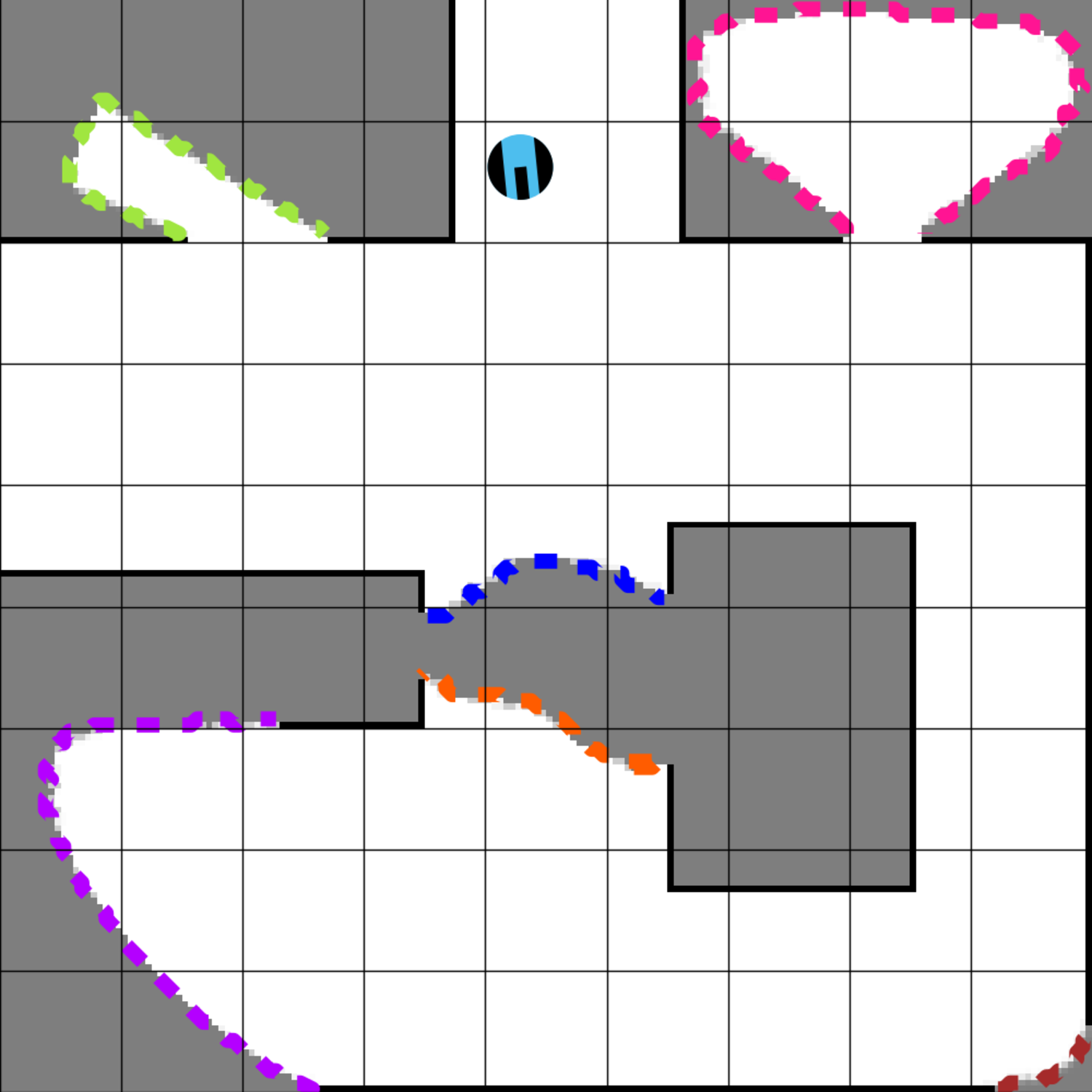} 
\end{tabular}  
\vspace{-2mm}
\caption{(left) The safe planning space, $\planningfreespace(t)$ (yellow), and control space, $\controlfreespace(t)$ (blue), for exploration are constructed by eroding the free space, $\freespace(t)$ (white), based on the robot's body radius $\radius$ and safety clearance $\clearance$.  (right) The connected frontier regions (colored dashed lines) are the boundary between the known free space, $\freespace(t)$ (white), and the unknown space, $\unknownspace(t)$ (gray), while the detected obstacle surfaces correspond to the known occupied space, $\occupiedspace(t)$ (black).  
}
\label{fig.over_under_approx}   
\vspace{-3mm}
\end{figure}

\subsection{Frontier Region Detection}
\label{sec.frontier_region_detection}

A widely used standard active exploration strategy for autonomous mapping is frontier-based exploration \cite{yamauchi_CIRA1997}, which directs a mobile robot to the boundary between the obstacle-free space $\freespace(t)$ and the unknown space $\unknownspace(t)$, also known as the frontier set $\frontierspace(t)$, defined as
\begin{align} \label{eq.frontiers}
    \frontierspace(t) &\ldf  \partial \freespace(t) \cap \partial \unknownspace(t),
\end{align}
to gather new information about the environment and incrementally complete its occupancy map $\map(t)$.
Here, $\partial A$ denotes  the boundary of a set $A$.
For example, the frontier set of an occupancy grid map can be efficiently determined as the set of free grid cells that have at least one neighboring unknown grid cell, based on different connectivity criteria, such as 4-connectivity or 8-connectivity for 2D grids.
A common challenge with a frontier set $\frontierspace(t)$ is that it is often disconnected and consists of a collection of connected regions of nearby frontiers, as illustrated in \reffig{fig.over_under_approx} (right). 
In robotic exploration, rather than treating each frontier point individually or considering all points as a whole, it is common practice to cluster frontier points into coherent, connected sub-regions, e.g., using connected component analysis or clustering, to consistently estimate the amount of new information available around each frontier cluster.
Hence, we assume that availability of a frontier clustering method, denoted by 
\begin{align}
\frontierregionFunc(\map(t)) \mapsto \clist{\frontierregion_1(t), \frontierregion_2(t), \ldots, \frontierregion_m(t)}, \nonumber
\end{align}
that returns a partition $\clist{\frontierregion_1(t), \ldots, \frontierregion_m(t)}$ of the frontier set $\frontierspace(t)$ of an occupancy map $\map(t)$ into  a finite number, $m \!\in\! \N$, of collectively exhaustive (i.e., fully covering all frontier), disjoint (i.e., non-overlapping), connected (i.e., inseparable) subsets, i.e., 
\begin{align}
\bigcup_{i=1}^m \frontierregion_i(t) = \frontierspace(t), \!\! \quad \text{and} \quad \!\!  \frontierregion_i (t) \cap \frontierregion_j(t)  \!=\! \varnothing \,\,\, \forall i \neq j,  \nonumber
\end{align}
and $\overline{P_i} \!\cap\! \overline{\frontierregion_i(t) \!\setminus \!P_i} \!\neq\! \varnothing$ for any nonempty strict \mbox{subset} $P_i \!\subsetneq \! \frontierregion_i(t)$, where $\overline{A}$ denotes the closure of a set $A$.
Accordingly, one can estimate the information content, denoted by $\info_{\map(t)}(\frontierregion_i(t))$ of a frontier region $\frontierregion_i(t)$, for example, using a simple uniform information measure as $\info_{\map(t)}(\frontierregion_i(t)) = 1$ or by using the entropy around the frontier region as a standard information-theoretic novelty measure \cite{stachniss_RoboticMappingExploration2009} as%
\footnote{\label{fn.entropy}The entropy around an $\epsilon$-neighborhood of a frontier region $\frontierregion$ over an occupancy map $\map$ can be determined using a Gaussian kernel as follows:
{\scriptsize
\begin{align*}
\overline{\entropy}(\frontierregion):= \!\int_{\frontierregion} \int_{\ball_\epsilon} & \hspace{-2mm} -\! \map(\pos \!+\! \vect{y}) \log \map(\pos \!+\! \vect{y}) e^{-\frac{\norm{\pos - \vect{y}}^2}{2 \sigma^2 }} \nonumber \\ 
& \hspace{-2.0mm} - \!(1\! -\! \map(\pos \!+\! \vect{y})\!) \log(1\!-\!\map(\pos \!+\! \vect{y})\!) e^{-\frac{\norm{\pos - \vect{y}}^2}{2 \sigma^2 }} \diff \vect{y} \diff \pos 
\end{align*}
}%
where $\ball_\epsilon:= \clist{\vect{y} \in \R^{n}| \, \norm{\vect{y}} \leq \epsilon}$ and $\sigma\!>\!0$ is a Gaussian kernel deviation.
}   
{\small
\begin{align}\label{eq.entropy}
\!\entropy\plist{\frontierregion_i(t)} \!:=\! \!\int\nolimits_{\frontierregion_i(t)}  & \hspace{-4.5mm} -\map(\pos, t) \log (\map(\pos, t)\!) \nonumber
\\ 
& \!\!\! - (1\!-\! \map(\pos, t)) \log (1 \!- \!\map(\pos, t)\!) \diff \pos, \!\!
\end{align}
}%
which is related and proportional with the size (i.e., volume)  of the frontier region $\frontierregion_i(t)$, denoted by  $\volume\plist{\frontierregion_i(t)}$,
{\small
\begin{align}\label{eq.volume}
\volume\plist{\frontierregion_i(t)} := \int_{\frontierregion_i(t)} \!\! 1 \, \diff \pos,
\end{align}    
}%
and corresponds to the number of frontier grids in the frontier region $\frontierregion_i(t)$ on an occupancy grid map \cite{stachniss_RoboticMappingExploration2009}. 
In \reftab{tab.information_measures_for_active_exploration}, we include a list of information measures for active exploration.

\begin{table}[t]
\caption{Information Measures for Active Exploration}
\label{tab.information_measures_for_active_exploration}
\centering
\vspace{-3mm}
\begin{tabular}{@{\hspace{0.5mm}}c@{\hspace{0.5mm}}c@{\hspace{0.5mm}}}
\hline
\hline 
\\[-3mm]
Type & Measure, $\info_{\map(t)}(\frontierregion)$
\\[0.1mm]
\hline
\\[-2.0mm]
\begin{tabular}{@{}c@{}}
Uniform Information
\end{tabular}
& 1
\\[2mm]
\begin{tabular}{@{}c@{}}
Frontier Region Size
\end{tabular}
&  $\volume\plist{\frontierregion} = \int_{\frontierregion} 1 \, \diff \pos$
\\[2mm]
\begin{tabular}{@{}c@{}}
Frontier Region Entropy
\end{tabular}
& $\,\,\entropy\plist{\frontierregion} \!=\!\! \int\limits_{\frontierregion} \!\!\!\scalebox{0.76}{$\begin{array}{l}
 \!\!- \map(\pos,t) \log \map(\pos,t)\\ \,\,\,\,- (1 \!-\! \map(\pos,t)\!)\log(1\!-\!\map(\pos,t)\!)  
\end{array}$} \!\!\diff \pos$
\\
\\[-3mm]
\hline
\hline
\end{tabular}
\vspace{-5mm}
\end{table}

\section{Planning \& Control for Safe Exploration}
\label{sec.planning_control_for_safe_exploration}

In this section, we focus on designing safe actions (i.e., motion planning and control) for active exploration and mapping of unknown environments.
We first describe how we plan a high-level exploration path from any safe starting location to a safe exploration goal, minimizing both the risk of collisions and the distance to unknown regions along the way to support safe exploration.
Next, we describe how to consistently and safely execute a high-level exploration path using safe path-following control for a nonholonomically constrained mobile robot with unicycle dynamics.

\begin{figure}[t]
    \centering
    \begin{tabular}{@{}c@{\hspace{0.01\columnwidth}}c@{\hspace{0.01\columnwidth}}c@{\hspace{0.01\columnwidth}}c@{}}
    \includegraphics[width = 0.31\columnwidth]{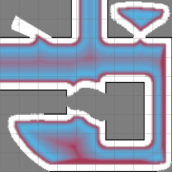} &
    \includegraphics[width = 0.31\columnwidth]{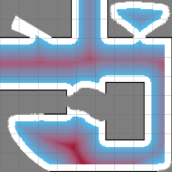} &
    \includegraphics[width = 0.31\columnwidth]{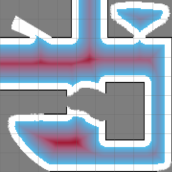} &
    \includegraphics[height =0.31\columnwidth]{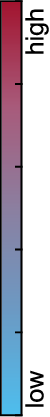}
    \vspace{-2mm}
    \end{tabular}
\caption{The local visit cost (left) for safe and informative path planning in exploration is determined as the ratio of the distance to unknown regions (middle) to the distance to collision (right), balancing obstacle clearance while biasing toward unknown regions.  
In low-cost regions (blue), the robot remains near unexplored areas at a safe distance from collisions.  
}
    \label{fig.safe_information_navigation_costmap_for_exploration}
    \vspace{-2mm}
\end{figure}

\subsection{Reference Path Planning} 
\label{sec.ReferencePathPlanning}

The objective of high-level reference path planning for safe exploration is to determine an optimal, informative, and safe path for a simplified, fully-actuated, position-controlled robot model from any given safe robot position $\pos \in \planningfreespace(t)$ to any selected safe exploration goal (e.g., a frontier) location $\goal \in \planningfreespace(t)$, that avoids collisions along the way while being biased toward unexplored regions. 
Note that $\planningfreespace(t)$ here denotes the safe planning space of a finite-sized circular mobile robot, as defined in \refeq{eq.planningfreespace}, with an additional control margin to ensure continuous and safe control under (e.g., nonholonomic) motion constraints (see \refrem{rem.safe_planning_control_spaces}).
Accordingly, for exploratory path planning over a continuously updated probabilistic occupancy map $\map(t)$ of an unknown static workspace $\workspace$, we define the local visit cost of each location $\pos \in \workspace$ as the ratio of its distance to unknown regions (quantifying how informative it is to visit), denoted by $\distunknown_{\map(t)}(\pos)$, to its distance to collision (quantifying how risky it is to visit), denoted by $\distcoll_{\map(t)}(\pos)$, as 
\begin{align} \label{eq.visitcost}
\visitcost_{\map(t)}(\pos) \!\ldf\! \frac{\distunknown_{\map(t)}(\pos)}{ \distcoll_{\map(t)}(\pos)}. 
\end{align}
For example, as a measure of the distance to the unknown, one can use the inverse of the entropy $\entropy_{\map(t)}(\pos)$ in \refeq{eq.entropy} or the truncated Euclidean distance from $\pos$ to the unknown region $\unknownspace(t)$ of the map $\map(t)$, defined as (see \reffig{fig.safe_information_navigation_costmap_for_exploration}\,(middle))
\begin{align}
\distunknown_{\map(t)}(\pos):= \min \plist{\alpha_{\max}, \min_{\vect{y} \in \unknownspace(t)} \norm{\pos - \vect{y}}},
\end{align}
where $\alpha_{\max} > 0$ is a constant saturation limit that bounds the effect of exploration bias locally.
As a measure of the distance to collision, we use the truncated Euclidean distance from $\pos$ to the exterior (i.e., complement) of the safe planning space $\planningfreespace(t)$ as (see \reffig{fig.safe_information_navigation_costmap_for_exploration}\,(right))
\begin{align} \label{eq.distance_to_collision}
\distcoll_{\map(t)}(\pos) \!:=\!  \min\plist{\beta_{\max}, \min_{\vect{y} \in \workspace \setminus \planningfreespace(t)} \hspace{-1mm} \norm{\pos - \vect{y}}\!},
\end{align}    
where $\beta_{\max}>0$ is a user-defined constant saturation distance that heuristically limits the repulsive effect of the inverse distance field for collision avoidance.
Note that both the distance-to-unknown and distance-to-collision measures can be efficiently computed over an occupancy grid map using standard Euclidean distance transformation methods, as illustrated in \reffig{fig.safe_information_navigation_costmap_for_exploration}.
We include in \reftab{tab.visit_cost} a list of potential measures for distance to the unknown and distance to collision that can be used for optimal safe exploration~planning.

Given such a costmap defined over the occupancy (grid) map $\map(t)$ based on the local visit cost $\visitcost_{\map(t)}(\pos)$ of each position $\pos \in \workspace$, one can use a search-based (e.g., Dijkstra's) or sampling-based (RRT$^*$) planning algorithm to determine an optimal path connecting any safe start position $\pos \in \planningfreespace(t)$ to any goal position $\goal \in \planningfreespace(t)$, minimizing the total cost along the path.
Accordingly, we assume the availability of an optimal safe path planning method, denoted by $\optimalpath_{\map(t)}(\pos, \goal)$, that returns a collision-free path in the safe planning space $\planningfreespace(t)$ such that the travel cost $\travelcost_{\map(t)}(\pos, \goal)$ along the path is minimal, where%
\footnote{
Over an occupancy grid map, the travel cost between a pair of adjacent grids, $\pos_i$ and $\pos_j$, defines a transition (i.e., edge) cost between them based on the average visit cost along the straight-line connection as
\begin{align*}
\travelcost_{\map(t)}(\pos_i, \pos_j) \!=\! \frac{\visitcost_{\map(t)}\!(\pos_i) \!+\! \visitcost_{\map(t)}\!(\pos_j)\!}{2}\norm{\pos_i \!-\! \pos_j}.
\end{align*} 
}
\begin{align}\label{eq.travelcost}
\travelcost_{\map(t)}(\pos, \goal) := \! \int_{\optimalpath_{\map(t)}(\pos, \goal)} \hspace{-16mm} \visitcost_{\map(t)}(\vect{y})\diff \vect{y}. \!\!
\end{align}
This also allows us to determine the reachable subset, denoted by $\planningfreespace(\pos, t)$, of the safe planning space $\planningfreespace(t)$ from any robot position $\pos \in \planningfreespace(t)$ as 
{
\begin{align}\label{eq.reachableplanningfreespace}
\planningfreespace(\pos, t) \!:=\! \clist{\goal \!\in\! \planningfreespace(t) \big | \optimalpath_{\map(t)}(\pos, \goal) \!\neq\! \varnothing}. \!\!\!
\end{align}
}%
Finally, the optimal travel cost $\travelcost_{\map(t)}(\pos, \goal)$ is also used as a geodesic navigation cost for frontier viewpoint selection later, serving as an alternative to uniform or Euclidean distance-based navigation heuristics, as listed in \reftab{tab.navigation_cost_measures}.       

\begin{table}[t]
\caption{Distance to Unknown and Collision \\[-1mm] for Safe Exploration Planning}
\label{tab.visit_cost}
\centering
\vspace{-4mm}
{\scriptsize 
(Measures for determining the visit cost of a position $\pos \in \workspace$ over an occupancy map $\map(t)$ with an obstacle-free  space $\freespace$ and an unknown region $\unknownspace$.)
}
\begin{tabular}{@{}c@{\hspace{3mm}}c@{}}
\\[-2mm]
    \hline
    \hline 
    \\[-3mm]
    Distance to Unknown & Measure, $\distunknown_{\map(t)}(\pos)$
    \\
    \hline
    \\[-2mm]
    \begin{tabular}{@{}c@{}}
        Uniform Distance
    \end{tabular}
    &  $1$
    \\[1mm]
    \begin{tabular}{@{}c@{}}
    Inverse Entropy
    \end{tabular}
    & $\frac{1}{-\map(\pos,t) \log(\map(\pos,t)) - (1- \map(\pos,t)) \log(1 - \map(\pos,t)) }$
    \\[1mm]
    \begin{tabular}{@{}c@{}}
        Euclidean Distance 
    \end{tabular}
    & $\min\plist{\delta, \min_{\vect{y} \in \unknownspace} \norm{\pos - \vect{y}}}$
    \\
    \\[-3mm]
    \hline
    \hline
    \\[-2mm]
    Distance to Collision & Measure, $\distcoll_{\map(t)}(\pos)$
    \\
    \hline
    \\[-2mm]
    \begin{tabular}{@{}c@{}}
    Uniform Distance
    \end{tabular}
    &  $1$
    \\[1mm]
    \begin{tabular}{@{}c@{}}
        Euclidean Distance 
    \end{tabular}
    & $\min\plist{\delta, \min_{\vect{y} \not \in \freespace} \norm{\pos - \vect{y}}}$
    \\
    \hline
    \\[-2.5mm]
    \hline
    \end{tabular}
    \vspace{-\baselineskip}
\end{table}

\subsection{Safe Path Following} 
\label{sec.SafePathFollowing}

Reference path planning with a simplified, fully-actuated, position-controlled robot model is often preferred for ease and computational efficiency in high-level complex decision-making for autonomous robotic exploration of unknown environments. 
However, when it is time to execute the exploration plan on a physical robotic platform, such simplified reference plans often suffer from being unsuitable for precise path-following control due to dynamic feasibility and safety issues for underactuated, nonholonomically constrained (e.g., differential-drive unicycle) mobile robots.
Another common practical challenge for safe path-following control under (nonholonomic) motion constraints is maintaining persistent progress along the path (i.e., the selected guiding local goal on the path for path-following control is expected to always move forward along the path from the start to the end), which may be easily violated because significant deviation from a reference path is sometimes unavoidable or even desirable in order to better utilize the available free space around the path for potential shortcuts and adaptation to more recent observations \cite{isleyen_vandewouw_arslan_IROS2023, isleyen_vandewouw_arslan_cdc2023}.
In this paper, due to their simplicity and wide use in practice, we consider the kinematic unicycle mobile robot model, whose state is represented by its position $\pos \in \R^2$ and forward orientation angle $\ort \in [ -\pi, \pi )$, measured in radians counterclockwise from the horizontal axis, and whose equation of motion is given by
\begin{align} \label{eq.UnicycleDynamics}
\dot{\pos} = \linvel \ovect{\ort} \quad \text{and} \quad \dot{\ort} = \angvel,
\end{align}
where  $\linvel, \angvel \in \R$ are the scalar control inputs that respectively specify the linear and angular velocity of the unicycle robot.
To generate safe robot motion in the control space $\controlfreespace(t)$ in order to realize a reference plan built in the safe planning space $\planningfreespace(t) \subseteq \controlfreespace(t)$ (see \refrem{rem.safe_planning_control_spaces}), we assume the availability of a safe and persistent unicycle path following control approach with the following properties \cite{isleyen_vandewouw_arslan_IROS2023, isleyen_vandewouw_arslan_cdc2023}:

%
%
\begin{table}[t]
    \vspace{1mm}
    \caption{Navigation Cost Measures for Active Exploration}
    \label{tab.navigation_cost_measures}
    \centering
    \vspace{-3mm}
    \begin{tabular}{@{}c@{\hspace{2mm}}c@{}}
    \hline
    \hline 
    \\[-3mm]
    \hspace{0.075\textwidth} Type \hspace{0.05\textwidth} & \hspace{0.05\textwidth} Measure, $\navcost_{\map(t)} (\pos, \vect{y})$ \hspace{0.05\textwidth}
    \\
    \hline
    \\[-2mm]
    \begin{tabular}{@{}c@{}}
        Uniform Cost
    \end{tabular}
    & 1
	\\[1mm]
    \begin{tabular}{@{}c@{}}
        Euclidean Distance
    \end{tabular}
    &  $\norm{\pos - \vect{y}}$
    \\[1mm]
    \begin{tabular}{@{}c@{}}
        Geodesic Distance
    \end{tabular}
    &  $\travelcost_{\map(t)}(\pos, \goal)$
    \\
    \hline
    \\[-2.5mm]
    \hline
    \end{tabular}
    \vspace{-\baselineskip}
\end{table}

\begin{assumption} \label{asm.safe_unicycle_path_following}
\emph{\!(Safe and Persistent Unicycle Path Following)}
For any given safe reference path $\path(\pathparam):[0,1] \rightarrow \planningfreespace(t)$, there exist  a unicycle path-following controller $\ctrl_{\path}(\pos, \ort, \pathparam) \ldf \plist{ \linvel_{\path}(\pos, \ort, \pathparam), \angvel_{\path}(\pos, \ort, \pathparam), \sigma_{\path}(\pos, \ort, \pathparam)}$ such that the closed-loop unicycle path-following dynamics 
\begin{align} \label{eq.SafePathFollowing}
    \!\! \dot{\pos} \!=\! \linvel_{\path}(\pos, \ort, \pathparam) \ovectsmall{\ort}, \,\,\,
    \dot{\ort} \!=\! \angvel_{\path}(\pos, \ort, \pathparam), \,\,\,
    \dot{\pathparam} \!=\!  \sigma_{\path}(\pos, \ort, \pathparam),\!\!
\end{align}
consistently make progress along the reference path (i.e., the reference path parameter $\pathparam \in [0,1]$ is non-decreasing as $\sigma_{\path}(\pos, \ort, \pathparam) \geq 0$) and asymptotically bring any initial safe unicycle state $(\pos, \ort) \in \controlfreespace (t) \times [-\pi, \pi)$ in the safe control space $\controlfreespace(t)$ with $\pos = \path(0)$, as well as the reference path parameter $\pathparam$, from the start $\path(0)$ to the end $\path(1)$ of the reference path, without collisions along the way, i.e.,
\begin{align} \nonumber
&\pos(t) \!\in\! \controlfreespace(t) \!\quad\! \forall t \!\geq\! 0, \!\quad\!
&\lim_{t \rightarrow \infty} \pos(t) = \lim_{t \rightarrow \infty} \path(\pathparam(t)) = \path(\smax).
\end{align}
\end{assumption}
Note that safe and persistent path following, with guaranteed consistent progress along a reference path, is essential for active exploration in unknown environments to avoid deadlocks and livelocks, especially when combined with online replanning, as discussed later in \refsec{sec.exploration_via_online_planning}.  
To ensure a persistent progress along an exploration path, we use a verifiably safe and correct  unicycle path-following control strategy based on feedback motion prediction \cite{isleyen_vandewouw_arslan_IROS2023, isleyen_vandewouw_arslan_cdc2023} and time governors \cite{arslan_arXiv2022}, which determine the unicycle's linear and angular speeds as well as the path parameter rate~as 
\begin{subequations} \label{eq.SafePathFollowing2}
\begin{align}
    &\!\!\!\linvel_{\path}(\pos,\! \theta,\! \pathparam) \!=\! \lingain \max \plist{\! 0 , \ovecTsmalll{\ort}\! \!(\path(\!\!\:\pathparam\!\!\:) \!-\! \pos)\!\!} , \label{eq.SafePathFollowing2a}
    \\
    &\!\!\!\angvel_{\path}(\pos,\! \theta,\! \pathparam) \!=\! \anggain \!\atantwo\plist{\!\nvecTsmalll{\ort}\!\! (\path(\!\!\:\pathparam\!\!\:) \!-\! \pos),\! \ovecTsmalll{\ort}\!\! (\path(\!\!\:\pathparam\!\!\:) \!-\! \pos)\!\! } ,\!\! \label{eq.SafePathFollowing2b}
    \\
    &\!\!\!\sigma_{\path}(\pos,\! \theta,\! \pathparam) \!=\!  \min \plist{ \gain_\safelevel \dist_{\controlfreespace}\! \plist{\motionset_{\!\path(s)}(\pos, \ort)}, \!\gain_\pathparam (\smax \!-\! \pathparam) } ,\!\! \label{eq.SafePathFollowing2c}
\end{align}
\end{subequations}
where $\lingain, \anggain >0$ are fixed positive unicycle control gains for the linear and angular velocity, respectively, and  $\gain_\safelevel, \gain_\pathparam > 0$ are fixed positive control coefficients for online path time parametrization.
Here, $\dist_{\controlfreespace} \plist{\mathcal{X}}$ denotes the distance of a set $\mathcal{X}$ to the exterior of the control space $\controlfreespace(t)$, defined as\footnote{Note that one can also use the minimum distance to the LiDAR-sensed obstacle points  for sensor-based reactive safe path following.}
\begin{align}
\dist_{\controlfreespace}(\mathcal{A}):= \min_{\substack{ \pos \in \mathcal{X} \\ \vect{y} \in \workspace\setminus \controlfreespace(t)}} \norm{\pos - \vect{y}},
\end{align}
and  $\motionset_{\path(s)}(\pos, \ort)$ is a unicycle feedback motion prediction set bounding the closed-loop unicycle position trajectory $\pos(t)$ towards the reference path point $\path(\pathparam)$, defined as \cite{isleyen_vandewouw_arslan_IROS2023}   
{\small
\begin{align} \nonumber
\scalebox{0.9}{$
     \motionset_{ \path(\pathparam)\!}(\pos, \ort) \!\!=\!\!
    \left\{ 
    \begin{array}{@{}l@{}l@{}}
     \conv \plist{ \! \pos, \ball\plist{\path(\!\!\:\pathparam\!\!\:), \!\absval{\!\nvecTsmalll{\ort}\!\!(\path(\!\!\:\pathparam\!\!\:) \!-\! \pos)\!}}\!\!}  & \text{, if } \ovecTsmalll{\ort}\!\!(\path(\!\!\:\pathparam\!\!\:) \!-\! \pos) \!\geq\! 0   \\
      \ball(\path(\!\!\:\pathparam\!\!\:), \norm{\path(\!\!\:\pathparam\!\!\:) \!-\! \pos}) & \text{, otherwise,}
    \end{array} 
    \right.$}
\end{align}
}%
where $\conv$ denotes the convex hull operator.
In brief, the safe unicycle path-following controller in \refeq{eq.SafePathFollowing2} uses the predicted distance to collision, $\dist_{\controlfreespace}\! \plist{\motionset_{\!\path(s)}(\pos, \ort)}$, to increase the path parameter $\pathparam$ in order to persistently make progress along the reference path, while continuously steering the unicycle robot to chase the reference path point $\path(\pathparam)$ as a local control goal \cite{isleyen_vandewouw_arslan_IROS2023}.

\begin{remark}\label{rem.to_be_in_control_not_to_be_in_planning}
\emph{(To be in control or not to be in  planning)} 
Under the safe path-following control (\refasm{asm.safe_unicycle_path_following}), the robot's position $\pos(t)$ always remains within the safe control space $\controlfreespace(t)$, but it does not necessarily stay within the safe planning space $\planningfreespace(t)$, because the safe control space is larger than the safe planning space by a margin equal to the planning-control discrepancy, $\clearance$, i.e., $\controlfreespace(t) = \planningfreespace(t) \oplus \ball(\vect{0}, \clearance) \supset \planningfreespace(t)$ (see \refrem{rem.safe_planning_control_spaces}). 
Therefore, while the robot's position, $\pos(t)$, cannot always be used for (exploration) planning, the current path goal, $\path(\pathparam(t))$, serves as a suitable alternative, as it is safely reachable by $\pos(t)$.
\end{remark}

\begin{figure}[t]
\centering
\begin{tabular}{@{}c@{\hspace{0.01\columnwidth}}c@{}}
    \includegraphics[width = 0.49\columnwidth]{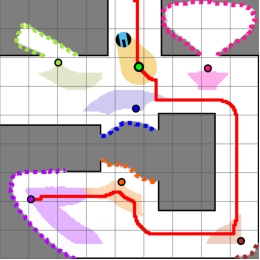} &
    \includegraphics[width = 0.49\columnwidth]{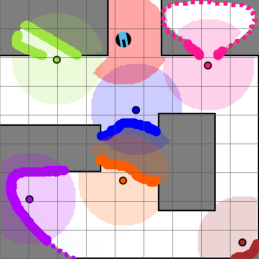}
\end{tabular}
\vspace{-3mm}
\caption{A unicycle mobile robot navigates to the best reachable viewpoint of a frontier region, based on the maximum information utility per navigation cost, by safely and persistently following the reference path (red line) using unicycle feedback motion prediction (yellow cone) towards the local control goal (green dot) on the path.  
(left) The viewpoint sets (colored patches) and the selected viewpoints (colored circles) of the frontier region (colored dashed line).
(right) The immediately available actionable information of a viewpoint is determined by its visible frontiers (highlighted in bold).
}
\label{fig.safe_path_following}
\vspace{-3mm}
\end{figure}

\section{Action-Aware Active Exploration}
\label{sec.action_aware_active_exploration}

In this section, we present an action-aware frontier viewpoint selection strategy for exploration that maximizes information utility per travel cost. 
We then describe its application in two provably correct proactive replanning strategies for adaptive active exploration, as an alternative to the standard  exploration approach with persistent planning, while continuously mapping and safe path following.

\subsection{Action-Aware Frontier Viewpoint Selection} 
\label{sec.ActionAwareFrontierSelection}

Many existing frontier-based exploration strategies for active mapping select the next best viewpoint from the detected frontier points or a statistically representative point (e.g., mean or median) \cite{placed_etal_TRO2023, lluvia_etal_Sensors2021}.
However, selecting a frontier as a viewpoint may be unsafe or unreachable due to unexplored obstacles near the frontier points.
Hence, we start with determining the viewpoint set $\viewpointset_{\map(t)}(\frontierregion_i)$ of a connected frontier region $\frontierregion_i \in \frontierspace(t)=\clist{\frontierregion_1, \ldots, \frontierregion_m}$ as the collection of safe positions in the planning space $\planningfreespace(t)$ from which at least one frontier point in $\frontierregion_i$ is reliably visible over the free space $\freespace(t)$ within the robot's sensing range $\sensorrange>0$ and respecting a certain fixed visibility tolerance $\visibletolerance>0$, as illustrated in \reffig{fig.safe_path_following}\,(left), as
{
\begin{align}\label{eq.viewpointset}
\!\viewpointset_{\map(t)}(\frontierregion_i) \!:=\!  \left \{ \big. \right. &  \!\viewpoint \!\in\! \planningfreespace(t) \big | \,   \exists \frontier \in \frontierregion_i \text{ s. t. }  \nonumber \\ 
&  \hspace{-5mm}  [\frontier, \vect{u}] \! \subseteq \!\freespace(t), \norm{\frontier \!-\! \vect{u}} \!\leq\! \sensorrange \, \forall \vect{u} \!\in\! \ball(\viewpoint, \eta) \!\left. \big. \right \}. \!\!\! 
\end{align}
}%
Note that the visibility tolerance $\visibletolerance$ plays a key role in ensuring robustness and numerical stability through continuous discretization, thereby capturing the effect of discretization resolution in standard occupancy grid maps.
This notion of the viewpoint set also allows for a systematic selection of a safe and reachable viewpoint for a frontier region $\frontierregion_i$ in the reachable planning space $\planningfreespace(\pos, t)$ of a safe robot position $\pos \in \planningfreespace(t)$ that minimizes the total Euclidean distance to the frontier points in $\frontierregion_i$ as%
\footnote{One may also use the volume of the visible frontiers of a viewpoint to select an optimal viewpoint for a frontier region as
\begin{align*}
\view_{\pos, \map(t)}(\frontierregion_i) \!:=\! \argmax_{ \substack{\viewpoint \in \viewpointset_{\map(t)}(\frontierregion_i)\\ \viewpoint \in \planningfreespace(\pos, t)}} \, \volume\plist{\visiblefrontiers_{\map(t)}(\viewpoint) },
\end{align*}
but this is significantly more costly to compute than \refeq{eq.viewpoint}.  
} 
\begin{align} \label{eq.viewpoint}
\view_{\pos, \map(t)}(\frontierregion_i) \ldf \argmin_{ \substack{\viewpoint \in \viewpointset_{\map(t)}(\frontierregion_i)\\ \viewpoint \in \planningfreespace(\pos, t)}} \,\sum_{\frontier \in \frontierregion_i} \norm{\viewpoint - \frontier}.
\end{align}

Moreover, to quantify the immediately available information  content at a viewpoint, as the inverse of the viewpoint set $\viewpointset_{\map(t)}(\frontierspace(t))$, we define the visible frontier set $\visiblefrontiers_{\map(t)}(\vect{v})$ of an occupancy map $\map(t)$ from a safe viewpoint $\viewpoint \in \planningfreespace(t)$, respecting the visibility tolerance $\visibletolerance$, as
{
\begin{align} \label{eq.visiblefrontiers}
\visiblefrontiers_{\map(t)}(\vect{v}) \!\ldf\!\!  \clist{ \frontier \!\in\! \frontierspace(t) \Big | 
\scalebox{0.9}{$
\begin{array}{l}
[\frontier, \!\vect{u}] \! \subseteq \!\freespace(t),\\
\norm{\frontier \!-\! \vect{u}} \!\leq\! \sensorrange
\end{array}$}
  \, \forall \vect{u} \!\in\! \ball(\viewpoint, \eta)\! },
\end{align}
}%
which is strictly bounded as \mbox{$\visiblefrontiers_{\map(t)}(\vect{v})\!\subseteq\!\ball(\viewpoint, \sensorrange \!- \!\eta)$}.
Both the entropy and the volume of the visible frontier set $\visiblefrontiers_{\map(t)}(\vect{v})$, denoted by $\entropy\plist{\visiblefrontiers_{\map(t)}(\vect{v})}$ and $\volume\plist{\visiblefrontiers_{\map(t)}(\vect{v})}$, as defined in \refeq{eq.entropy} and \refeq{eq.volume}, respectively, can be used to heuristically determine the visible information content from the viewpoint $\viewpoint$.  
Since the difference between the entropy and volume of frontier regions is negligible both in theory and practice, we consider the volume of the visible frontiers as an intuitive and interpretable geometric measure of immediately available actionable information at a viewpoint.
Accordingly, we define the immediately available actionable information of a frontier region $\frontierregion_i$ as the volume of the visible frontiers from the viewpoint $\view_{\pos}(\frontierregion_i)$ of the frontier region $\frontierregion_i$ that is safely reachable from the robot position $\pos \in \planningfreespace(t)$, as illustrated in \reffig{fig.safe_path_following}\,(right), as
\begin{align}\label{eq.actionable_info}
\info_{\pos,\map(t)}(\frontierregion_i)\!:=\! \volume(\visiblefrontiers_{\map(t)}(\view_{\pos,\map(t)}(\frontierregion_i)\!)\!).\!\!\!
\end{align} 
Note that the immediately available actionable information $\info_{\pos,\map(t)}(\frontierregion_i)$ is an approximate measure of the exploration rate (under ``teleportation,'' i.e., instantaneous movement) and can be effectively used as an accurate method for eliminating potential frontier regions that are either inaccessible or provide negligible information for exploration, in contrast to more classical frontier region elimination methods based on volume or failed attempts to reach the frontiers.
   
As expected from \refasm{asm.consistent_occupancy_mapping}, if a robot reaches a point near a viewpoint, the continuous consistent occupancy mapping turns an initially informative viewpoint into an uninformative one with no nearby frontiers for all future~times.

\begin{lemma} \label{lem.explored_viewpoints}
\emph{(Explored Viewpoints)}
If there is no frontier around a safe robot position $\pos \in \planningfreespace(t)$ within the sensing range $\sensorrange$, then there are also no reliably visible frontiers $\visiblefrontiers_{\map(t)}(\viewpoint)$ from any neighboring viewpoint $\viewpoint \in \ball(\pos, \eta)$ within the visibility tolerance $\visibletolerance$, i.e.,
\begin{align}
\min_{\frontier \in \frontierspace(t)} \!\! \norm{\frontier \!-\! \pos} > \sensorrange \Rightarrow \visiblefrontiers_{\map(t)}(\viewpoint) \!=\! \varnothing \quad  \forall \viewpoint \!\in\! \ball(\pos, \eta). \!\!
\end{align}   
\end{lemma}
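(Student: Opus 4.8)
The plan is to argue by contrapositive on the geometry of visibility. Assume, for contradiction, that there exists a neighboring viewpoint $\viewpoint \in \ball(\pos, \eta)$ with a nonempty visible frontier set, i.e., there is some frontier point $\frontier \in \visiblefrontiers_{\map(t)}(\viewpoint)$. By the definition of the visible frontier set in \refeq{eq.visiblefrontiers}, this frontier $\frontier$ satisfies $[\frontier, \vect{u}] \subseteq \freespace(t)$ and $\norm{\frontier - \vect{u}} \leq \sensorrange$ \emph{for all} $\vect{u} \in \ball(\viewpoint, \eta)$.

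The key step is to specialize the universally quantified condition at the particular point $\vect{u} = \pos$. For this to be a valid instantiation, I need $\pos \in \ball(\viewpoint, \eta)$, which follows immediately from the hypothesis $\viewpoint \in \ball(\pos, \eta)$ by symmetry of the Euclidean ball: $\norm{\viewpoint - \pos} \leq \eta \iff \norm{\pos - \viewpoint} \leq \eta$. Hence the condition at $\vect{u} = \pos$ gives $\norm{\frontier - \pos} \leq \sensorrange$, so $\frontier$ is a frontier point within the sensing range of $\pos$. This directly contradicts the hypothesis $\min_{\frontier' \in \frontierspace(t)} \norm{\frontier' - \pos} > \sensorrange$, which asserts that no frontier point lies within distance $\sensorrange$ of $\pos$. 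Therefore no such $\viewpoint$ can exist, and $\visiblefrontiers_{\map(t)}(\viewpoint) = \varnothing$ for every $\viewpoint \in \ball(\pos, \eta)$.

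I expect this proof to be short and essentially a direct unfolding of definitions; the only genuine content is the symmetry observation $\pos \in \ball(\viewpoint, \eta) \iff \viewpoint \in \ball(\pos, \eta)$, which licenses plugging $\pos$ into the ``for all $\vect{u}$'' clause of the visible-frontier definition. There is no obstacle of substance here: the statement does not even invoke \refasm{asm.consistent_occupancy_mapping} or the free-space conditions beyond what is already packed into \refeq{eq.visiblefrontiers}, since the range bound alone suffices to derive the contradiction. One minor point to state cleanly is that the argument holds for \emph{every} $\viewpoint$ in the ball, so after deriving the contradiction for an arbitrary such $\viewpoint$ with nonempty visible set, the conclusion follows for all of them simultaneously.
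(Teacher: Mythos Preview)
Your proposal is correct and is essentially the same approach as the paper's own proof, which simply states that the result follows from the definition of the visible frontier set in \refeq{eq.visiblefrontiers}. You have merely made explicit the one-line unpacking of that definition (in particular the symmetry $\viewpoint \in \ball(\pos,\eta) \Leftrightarrow \pos \in \ball(\viewpoint,\eta)$ that lets you instantiate $\vect{u}=\pos$), which is exactly what the paper leaves implicit.
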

\begin{proof}
The result follows from the definition of the visible frontier set $\visiblefrontiers_{\map(t)}(\viewpoint)$ in \refeq{eq.visiblefrontiers}.
\end{proof}
\noindent An important implication of this lemma is that a mobile robot is required to visit at most a finite set of locations, each apart from the others by the visibility tolerance $\visibletolerance$ (i.e., visiting each grid of an occupancy grid map in the worst case) to complete mapping. 
Hence, we check whether a robot position is near a viewpoint and vice versa using the visibility tolerance $\visibletolerance$ as
\begin{align} \label{eq.isnear}
\isnear(\pos, \viewpoint) = \begin{cases}
1 &  \text{if } \norm{\pos - \viewpoint} \leq \visibletolerance, \\
0 & \text{otherwise}.
\end{cases} 
\end{align}

Therefore, by combining the ideas of the viewpoints of frontier regions and their immediately available actionable information, we consider an action-aware exploration strategy to select a frontier region from the detected frontier regions $\frontierspace(t) = \clist{\frontierregion_1, \ldots, \frontierregion_{m}}$ that has a safely reachable and informative viewpoint from a robot position $\pos \in \planningfreespace(t)$ and maximize the information utility per travel cost as
\begin{align} \label{eq.target_frontier_region}
\!i^*:=  \!\!\! \argmax \limits_{ \substack{ i \in \clist{1,\dots,m} \\ \info_{\pos, \map(t)}\!(\frontierregion_i) > \mininfo  }  } \!\!\! \frac{\info_{\map(t)}(\frontierregion_i)}{ \navcost_{\map(t)} (\pos, \view_{\pos, \map(t)}(\frontierregion_i) )  }, \!\!
\end{align}
where $\info_{\map(t)}(\frontierregion_i)$ measures the utility of the information content of the frontier region $\frontierregion_i$ (see \reftab{tab.information_measures_for_active_exploration}), $\navcost_{\map(t)} (\pos, \view_{\pos,\map(t)}(\frontierregion_i))$ is a measure of the total travel cost to reach the frontier viewpoint $\view_{\pos, \map(t)}(\frontierregion_i)$ from the robot position $\pos$ (see \reftab{tab.navigation_cost_measures}), and $\mu \geq 0$ is a fixed minimum information threshold for the immediately available actionable information $\info_{\pos,\map(t)}(\frontierregion_i)$.
Hence, one can determine if a viewpoint $\viewpoint \in \freespace(t)$ satisfies the minimum actionable information requirement using the volume of the visible frontier set $\visiblefrontiers_{\map(t)}(\viewpoint)$  as
\begin{align} \label{eq.isinformative}
\!\!\isinformative (\map(t),\viewpoint) \!:=\! 
\begin{cases}
1 & \!\!\!\text{if } \!  \volume(\visiblefrontiers_{\map(t)\!}(\viewpoint)\!) \!>\! \mininfo, 
\\
0 & \!\!\!\text{otherwise}.
\end{cases} \!\!\!
\end{align}
Accordingly, we say that the (exploration for) mapping is completed if and only if the immediately available actionable information $\info_{\pos}(\frontierregion_i)$ for each frontier region $\frontierregion_i$ of the frontier set $ \frontierspace(t)=\clist{\frontierregion_1, \ldots, \frontierregion_m}$ of an occupancy map $\map(t)$, relative to a safe robot position $\pos \in \planningfreespace$, is below the minimum information threshold $\mininfo$, i.e.,
\begin{align} \label{eq.iscomplete}
\!\iscomplete(\map(t), \pos) \!:=\! 
\begin{cases}
1 & \!\! \text{if} \!\!\max\limits_{\frontierregion_i \in \frontierspace(t)}\!\!\info_{\pos,\map(t)}(\frontierregion_i) \!\leq\! \mininfo, \\
0 & \!\!\text{otherwise}.
\end{cases} 
\!\!\!
\end{align}
Below, we present three different approaches for active exploration using persistent planning and proactive replanning to achieve provably correct completion of mapping for any unknown environment in finite time, without deadlock (e.g., the robot stops or freezes for a long duration despite the presence of an informative frontier region) or livelock (e.g., the robot continuously switches between informative frontier regions without updating the map).

\subsection{Exploration via Persistent Planning}
\label{sec.exploration_via_persistent_planning}

Once a target frontier viewpoint is selected, a standard active exploration strategy uses persistent planning to determine an optimal reference path to the target viewpoint and then safely and persistently executes this plan until the robot reaches this viewpoint. 
Once the viewpoint is reached, the exploration procedure is repeated with the next best viewpoint until exploration and consequently mapping is completed.
More precisely, for any given safe robot position $\pos \in \planningfreespace(t)$ in the planning space $\planningfreespace(t)$, we apply  the optimal selection of a target frontier region $\frontierregion_{i^*}$ as in \refeq{eq.target_frontier_region} and determine an optimal exploration path in \refalg{alg.exploration_path_planning} from the start position $\pos$ to the frontier viewpoint $\view_{\pos,\map(t)}(\frontierregion_{i^*})$ defined in \refeq{eq.viewpoint} using the reference path planning method described in \refsec{sec.ReferencePathPlanning}, which is denoted by
{\small
\begin{align}
\explorationPlan(\map(t), \pos)\!:=\! \optimalpath_{\map(t)}(\pos, \view_{\pos,\map(t)}(\frontierregion_{i^*}\!)\!). \nonumber
\end{align}
}%
Then, we safely and persistently execute this exploration plan with a unicycle mobile robot using a safe path-following controller described in \refsec{sec.SafePathFollowing} (see \refasm{asm.safe_unicycle_path_following}) until reaching near the selected viewpoint (i.e., the end of the exploration path $\path(1)$ in \refalg{alg.exploration_via_persistent_planning}, line 5).
Once the viewpoint is reached, as discussed in \refrem{rem.to_be_in_control_not_to_be_in_planning}, we repeat this exploration procedure with persistent planning by finding a new exploration path from the current local path goal $\path(\pathparam(t))$ (in \refalg{alg.exploration_via_persistent_planning}, line 8) to the associated next-best frontier viewpoint until the entire exploration and mapping are completed relative to the selected viewpoint at the end of the exploration path $\path(\smax)$ (in \refalg{alg.exploration_via_persistent_planning}, line 6) as determined in \refeq{eq.iscomplete}.
It is important to note that the primary reason for planning from the current local path goal $\path(\pathparam(t))$ is that:
i) the robot position $\pos(t)$ under safe path-following control is guaranteed to remain within the safe control space $\controlfreespace(t)$ but not necessarily within the safe planning space $\planningfreespace(t)$ (see \refrem{rem.to_be_in_control_not_to_be_in_planning}), and
ii) on the other hand, the current path point, $\path(\pathparam(t))$, is always a safe robot position within the safe planning space $\planningfreespace(t)$, and the robot can safely and asymptotically reach $\path(\pathparam(t))$ from $\pos(t)$ within the safe control space $\controlfreespace(t)$ (see \refasm{asm.safe_unicycle_path_following}).

\begin{algorithm}[t] 
    \caption{Reference Path Planning for Exploration \\
    \begin{footnotesize}
    $\mathrm{explorationPlan(\map(t), \pos)}$
    \end{footnotesize} }
    \label{alg.exploration_path_planning}
    \begin{footnotesize}
    \SetKwInOut{Input}{Input}  
    \SetKwInOut{Output}{Output} 
    \Input{%
        $\map(t)$ --  Occupancy Map \\
        $\pos \in \planningfreespace(t)$ -- Start Position \\
    }
    \Output{%
        $\path$ -- Exploration Path
    \\
    \vspace{-1mm}
    }
    \BlankLine
    \hrule
    \BlankLine

    $\plist{ \frontierregion_1,\dots,\frontierregion_m}  \leftarrow \frontierregionFunc(\map(t))$
    \\
    $i^* \leftarrow \argmax \limits_{ \substack{i \in \clist{1,\dots,m} \\ \info_{\pos,\map(t)}(\frontierregion_i) > \mu  }  } \frac{\info_{\map(t)}(\frontierregion_i)}{ \navcost_{\map(t)} (\pos, \view(\frontierregion_i))}$\\
    $\path \leftarrow \optimalpath_{\map(t)}(\pos, \view_{\pos,\map(t)}(\frontierregion_{i^*})) $\\
    \Return{$\path$} \\
    
    \end{footnotesize}
\end{algorithm}

\begin{algorithm}[t!]
    \caption{Exploration via Persistent Planning}
    \label{alg.exploration_via_persistent_planning}
    \begin{footnotesize}
    \SetKwInOut{Input}{Input}  
    \SetKwInOut{Output}{Output} 
    \Input{%
    $\map_0$ -- Initial Occupancy Map\\
    $(\pos_0, \ort_0, \pathparam_0)$ -- Initial Safe State for Path Following \\  
    }
    \Output{%
    $\map^*$ -- Completed Occupancy Map\\
    \\
    \vspace{-1mm}
    }
    \BlankLine
    \hrule
    $\path \leftarrow \pos_0, \pathparam \leftarrow 0  \quad \quad \textit{\% Initial Path}$ \\
    \While{$\mathrm{OccupancyMapping\, \&\,SafePathFollowing}$}{
        $\map(t) \leftarrow \getmap\plist{ \, }$\\
        $(\pos, \ort, \pathparam) \!\leftarrow\! \getState\plist{ \, }$\\

        \If{$\isnear(\pos, \path(\smax))$ }{
            \If{$\iscomplete(\map(t), \path(\smax))$}{
                \Return{$\map^* \leftarrow \map(t)$}\\
            }
            $\path \leftarrow \mathrm{explorationPlan}(\map(t), \path(\pathparam))$\\
            $\pathparam \leftarrow 0$\\
        }
    }
    \end{footnotesize}
\end{algorithm}

\subsection{Exploration via Pro-Active Replanning}

Although exploration with persistent planning in \refalg{alg.exploration_via_persistent_planning} has advantages such as simplicity of implementation and computational efficiency, it suffers from a lack of adaptability to changes and newly discovered areas in the continuously updated occupancy map.
To address this limitation of exploration with persistent planning, we consider two simple proactive replanning strategies, replanning with last-mile prevention and periodic online replanning, for active exploration, aiming to enhance adaptability and mapping efficiency by increasing the exploration rate.
The exploration performance of these methods are systematically compared in numerical simulations and experiments later in \refsec{sec.numerical_simulations_physical_experiments}.

\subsubsection{Exploration with Last-Mile Prevention}

Persistently aiming to reach a selected target viewpoint at the end of an exploration plan, as described in \refalg{alg.exploration_via_persistent_planning}, often slows down the exploration process because the initially available visible frontiers around the viewpoint might already be explored before the viewpoint is reached, as the robot's sensing range is often significantly larger than the robot's body radius.
Hence, to avoid unnecessary last-mile travel at the end of an exploration plan, we propose an adaptive exploration strategy with last-mile prevention in \refalg{alg.exploration_with_last_mile_prevention} (line 5) that proactively triggers the replanning of a new exploration path if the current target viewpoint $\path(1)$ at the end of the exploration path $\path:[0,1] \rightarrow \planningfreespace(t)$ is uninformative with respect to the current occupancy map $\map(t)$, as determined by $\isinformative(\map(t), \path(1))$ in \refeq{eq.isinformative}.
As in persistent planning (\refalg{alg.exploration_via_persistent_planning}, line 8), the replanning of a new exploration plan with last-mile prevention (\refalg{alg.exploration_with_last_mile_prevention}, line 8) is always initiated from the current exploration path point $\path(\pathparam(t))$, since the current robot position $\pos(t)$ is guaranteed to safely reach to the path point $\path(\pathparam(t))$ within the safe control space $\controlfreespace(t)$, and the path point $\path(\pathparam(t))$ lies within the safe planning space $\planningfreespace(t)$.
We observe in \refsec{sec.numerical_simulations_physical_experiments} that this simple last-mile prevention strategy, with continuous evaluation of the informativeness of the current viewpoint, significantly and adaptively reduces unnecessary redundant motion around the end of the exploration plan, contributing to the speed of exploration and mapping.

\begin{algorithm}[t!]
    \caption{Exploration with Last-Mile Prevention}
    \label{alg.exploration_with_last_mile_prevention}
    \begin{footnotesize}
    \SetKwInOut{Input}{Input}  
    \SetKwInOut{Output}{Output} 
    \Input{%
    $\map_0$ -- Initial Occupancy Map\\
    $(\pos_0, \ort_0, \pathparam_0)$ -- Initial Safe State for Path Following \\  
    }
    \Output{%
    $\map^*$ -- Completed Occupancy Map\\
    \\
    \vspace{-1mm}
    }
    \BlankLine
    \hrule
    $\path \leftarrow \pos_0, \pathparam \leftarrow 0 \quad \quad \textit{\% Initial Path}$\\
    \While{$\mathrm{OccupancyMapping\, \&\,SafePathFollowing}$}{
        $\map(t) \leftarrow \getmap\plist{ \, }$\\
        $(\pos, \ort, \pathparam) \!\leftarrow\! \getState\plist{ \, }$\\

        \If{{\normalfont \textbf{not }}$\isinformative(\map(t), \path(\smax))$ }{
            \If{$\iscomplete(\map(t), \path(\smax))$}{
                \Return{$\map^* \leftarrow \map(t)$}\\
            }
            $\path \leftarrow \mathrm{explorationPlan}(\map(t), \path(\pathparam))$\\
            $\pathparam \leftarrow 0$\\
        }
    }
    \end{footnotesize}
\end{algorithm}

\subsubsection{Exploration via Online Planning}
\label{sec.exploration_via_online_planning}

Perhaps the simplest strategy for adaptive exploration planning while continuously mapping is to replan regularly at a certain rate.
While such exploration with online planning offers the advantage of high adaptability to changes in the continuously updated map, it usually suffers from the inherently high computational cost of periodic replanning, as well as potential issues with livelocks due to continuous switching between selected exploration viewpoints without any change or progress in the map.
In \refalg{alg.exploration_via_online_planning}, we present our active exploration strategy via online replanning at a fixed period of $\treplan > 0$ seconds, by simply removing the checks for being near or informative viewpoints, as used in persistent planning (\refalg{alg.exploration_via_persistent_planning}) and last-mile prevention (\refalg{alg.exploration_with_last_mile_prevention}), respectively.
Note that it essentially performs online replanning from the current path goal $\path(\pathparam(t))$ instead of the current robot position $\pos(t)$ (in \refalg{alg.exploration_via_online_planning}, line 7), not only to ensure the existence of an exploration path in the safe planning space $\planningfreespace(t)$, but also to monitor (and guarantee) continuous progress in mapping.
We measure the progress in mapping by the mapping percentage, defined as the ratio of the known volume of the map to its total volume, i.e.,
\begin{align}\label{eq.mapping_percentage}
\!\! \mappct(\map(t)\!) \!:=\! \frac{\volume(\freespace(t)\!\cup\! \occupiedspace(t)\!)}{\volume(\freespace(t)\! \cup\! \occupiedspace(t)\!) \!+\! \volume(\unknownspace(t)\!)}, \!\!\!
\end{align}
where $\freespace(t), \occupiedspace(t), \unknownspace(t)$ denote the free, occupied, and unknown parts of an occupancy map $\map(t)$ as defined in \refeq{eq.free_occupied_unknown_spaces}, respectively.

\begin{algorithm}[t]
    \caption{Exploration via Online Planning}
    \label{alg.exploration_via_online_planning}
    \begin{footnotesize}
    \SetKwInOut{Input}{Input}  
    \SetKwInOut{Output}{Output} 
    \BlankLine
    \Input{%
    $\map_0$ -- Initial Occupancy Map\\
    $(\pos_0, \ort_0, \pathparam_0)$ -- Initial Safe State for Path Following \\
    $\treplan$ -- Online Planning Period  
    }
    \Output{%
    $\map^*$ -- Completed Occupancy Map\\
    \\
    }
    \BlankLine
    \hrule
    $\path \leftarrow \pos_0, s \leftarrow 0 \quad \quad \textit{\% Initial Path}$\\
    \While{$\mathrm{OccupancyMapping\, \&\,SafePathFollowing}$}{
        $\map(t) \leftarrow \getmap\plist{ \, }$\\
        $(\pos, \ort, \pathparam) \!\leftarrow\! \getState\plist{ \, }$\\

        \If{$\iscomplete(\map(t), \path(\smax))$}{
            \Return{$\map^* \leftarrow \map(t)$}\\
        }
        $\path \leftarrow \explorationPlan(\map(t), \path(\pathparam))$ \\
        $\pathparam \leftarrow 0$ \\
        $\mathrm{pause}( \treplan )$  \\

    }
    \end{footnotesize}
\end{algorithm}

\vspace{-0.5\baselineskip}
\begin{remark}
\emph{(Exploration Replanning \& Completion with a Previous Plan)}
Since the robot position $\pos(t)$ is not always within the safe planning space $\planningfreespace(t)$ (\refrem{rem.to_be_in_control_not_to_be_in_planning}), exploration replanning should begin from a safely navigable path point (e.g., the current path point $\path(\pathparam(t))$). 
In contrast, the completion of exploration can be determined using any point on the current exploration path (e.g., $\path(\smin)$, $\path(\pathparam(t))$, or $\path(\smax)$).   
\end{remark}

\subsubsection{Guaranteed Finite-Time Exploration \& Mapping}

As expected, under \refasm{asm.safe_unicycle_path_following} and due to \reflem{lem.explored_viewpoints}, persistent planning ensures the completion of exploration in finite time due to its exhaustive nature.
In contrast, online planning is known to suffer from potential livelocks with improper navigation cost selection (e.g., Euclidean distance), which can be mitigated by using the geodesic navigation cost.

\begin{proposition}\label{prop.finite_time_mapping}
\emph{(Exploration and Mapping in Finite Time)}    
Starting at time $t\!=\!0$ with any initial occupancy map $\map(0)\!=\! \map_0$ of an unknown bounded static workspace $\workspace$ and any initial safe path-following state $(\pos_0, \ort_0, \pathparam_0) \in \planningfreespace(0) \times [-\pi, \pi) \times [\smin, \smax]$, exploration with persistent planning (\refalg{alg.exploration_via_persistent_planning}) and last-mile prevention (\refalg{alg.exploration_with_last_mile_prevention}), under consistent occupancy mapping (\refasm{asm.consistent_occupancy_mapping}) and safe path-following control (\refasm{asm.safe_unicycle_path_following}), ensures the safe completion of mapping in the sense of \refeq{eq.iscomplete} within finite time; 
however, exploration with online planning (\refalg{alg.exploration_via_online_planning}) guarantees finite-time completion of mapping if the navigation cost of the action-aware frontier region selection in \refeq{eq.target_frontier_region} is set to the actual travel (i.e., geodesic) cost of the reference path planner in \refeq{eq.travelcost}, i.e., $\navcost = \travelcost$. 
\end{proposition}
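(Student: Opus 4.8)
The plan is to handle the three algorithms in the two groups the statement suggests: first the exhaustive strategies of \refalg{alg.exploration_via_persistent_planning} and \refalg{alg.exploration_with_last_mile_prevention}, and then online planning (\refalg{alg.exploration_via_online_planning}) with the geodesic navigation cost $\navcost=\travelcost$.

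For the exhaustive strategies, I would run a finiteness-plus-monotonicity argument over the occupancy grid. Since $\workspace$ is bounded, the grid has finitely many cells, so the number $U(t)$ of cells still classified as unknown is a nonnegative integer, and by the increasingly-inclusive clause of \refasm{asm.consistent_occupancy_mapping} it is nonincreasing in $t$; safety of every trajectory is immediate from \refasm{asm.safe_unicycle_path_following}, which keeps $\pos(t)\in\controlfreespace(t)$. The key step is to show that between two consecutive executions of the replanning block (line~5 onward in \refalg{alg.exploration_via_persistent_planning} / \refalg{alg.exploration_with_last_mile_prevention}) at least one unknown cell becomes known, i.e.\ $U$ strictly drops. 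The selected target $\view_{\pos,\map(t)}(\frontierregion_{i^*})$ at the end of the current path satisfies $\info_{\pos,\map(t)}(\frontierregion_{i^*})>\mininfo\ge0$, so its visible-frontier set is nonempty: there is a frontier point $\frontier$ and a point $\vect{y}\in\unknownspace(t)$ just past it so that $[\vect{y},\vect{u}]\subset\workspace\setminus\occupiedspace(t)$ and $\norm{\vect{y}-\vect{u}}\le\sensorrange$ for all $\vect{u}\in\ball(\view_{\pos,\map(t)}(\frontierregion_{i^*}),\visibletolerance)$. By \refasm{asm.safe_unicycle_path_following} the robot reaches within $\visibletolerance$ of the path's end in finite time (for \refalg{alg.exploration_via_persistent_planning}), or the target ceases to be informative (for \refalg{alg.exploration_with_last_mile_prevention}); in either case, either that unknown cell has already been sensed (so $U$ dropped), or, using monotonicity of $\freespace$, the robot sits within sensing range of $\vect{y}$ along a line of sight in $\workspace\setminus\occupiedspace(t)$, and the unknown-space-exploration clause of \refasm{asm.consistent_occupancy_mapping} forces $\unknownspace$ to shrink strictly thereafter. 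Hence the replanning block fires $O(U(0))$ times; between firings only a finite amount of time elapses (\refasm{asm.safe_unicycle_path_following}); and after the last firing the loop must satisfy $\iscomplete$ (otherwise \reflem{lem.explored_viewpoints} together with \refasm{asm.consistent_occupancy_mapping} and \refasm{asm.safe_unicycle_path_following} would force yet another firing), so $\map^*$ is returned in finite time.

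For online planning I would argue by contradiction. If the process never terminates then, since it replans every $\treplan$ seconds, infinitely many replannings occur; as $U(t)$ is a nonincreasing nonnegative integer it is eventually constant, say for $t\ge T$, and then by the monotonicity clause of \refasm{asm.consistent_occupancy_mapping} the free/occupied/unknown partition of $\map(t)$ is constant for $t\ge T$, say equal to $\map_\infty$. Since $\iscomplete$ never fires, $\map_\infty$ still carries a frontier region whose viewpoint sees more than $\mininfo$ of visible frontier, and by the same line-of-sight computation as above the robot can never come within $\visibletolerance$ of any such informative viewpoint after $T$ without making $\unknownspace$ shrink, contradicting the constancy of $\map_\infty$. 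The task is thus to show that with $\navcost=\travelcost$ the robot \emph{must} eventually get that close. I would do this with the potential $V(\pos):=\max_i \info_{\map_\infty}(\frontierregion_i) / \travelcost_{\map_\infty}(\pos,\view_{\pos,\map_\infty}(\frontierregion_i))$ over the now-static, finitely many frontier regions. Because $\travelcost$ obeys the triangle inequality and Bellman's principle of optimality, advancing along the current optimal path toward the selected viewpoint strictly decreases the remaining travel cost to it while decreasing the travel cost to every other viewpoint by no more; hence $V$ is nondecreasing along the trajectory and strictly increases over each period in which the robot moves. It is also bounded above: after $T$ the robot stays farther than $\visibletolerance$ from every informative viewpoint, and optimal paths lie in $\planningfreespace(t)$, where $\distunknown$ is bounded below by the positive constant $\min(\alpha_{\max},\radius+\clearance)$ and $\distcoll\le\beta_{\max}$, so $\visitcost$, and hence $\travelcost(\pos,\view(\frontierregion_i))$, is bounded below by a positive constant $c_1$. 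A bounded monotone $V$ whose per-period increase dominates a fixed multiple of the geodesic progress $d_k$ made in that period forces $\sum_k d_k<\infty$, so $d_k\to0$; but \refasm{asm.safe_unicycle_path_following} applied to each freshly planned path (of length at least $c_1$) guarantees a bounded-below amount of progress per period $\treplan$ --- a contradiction. With a Euclidean cost the triangle-inequality step fails, $V$ need not be monotone, and a livelock in a ``bug-trap'' becomes possible, which is exactly why the geodesic cost is needed.

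The step I expect to be the main obstacle is this last one: turning ``the robot keeps replanning but never reaches an informative viewpoint'' into a contradiction needs a \emph{uniform} lower bound on the progress the unicycle makes along a freshly planned path of bounded-below length within one period $\treplan$, which the purely asymptotic statement of \refasm{asm.safe_unicycle_path_following} does not supply directly; I would extract it from the explicit control law \refeq{eq.SafePathFollowing2}, noting that $\sigma_{\path}$ is bounded below near the start of a path because the reference point lies in $\planningfreespace(t)$ with clearance $\clearance$, so the path parameter, and then the robot, moves a definite amount in time $\treplan$. A secondary subtlety is that the executed unicycle trajectory deviates from the optimal reference path, so the triangle-inequality bookkeeping for $V$ must be done with a controlled tube around the reference path, or --- more cleanly --- stated only at replanning instants using the reference point $\path(\pathparam(t))$ (which by \refrem{rem.to_be_in_control_not_to_be_in_planning} is where replanning actually starts and which does lie on an optimal path) rather than the robot position $\pos(t)$.
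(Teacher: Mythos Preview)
Your argument for persistent and last-mile-preventive planning is essentially the paper's: both run a finite counting argument (the paper counts $\visibletolerance$-separated informative viewpoints, you count unknown grid cells) together with the observation from \refasm{asm.consistent_occupancy_mapping} that entering the $\visibletolerance$-ball of an informative viewpoint forces $\unknownspace$ to shrink strictly and permanently.

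For online planning the routes diverge. Both start from the same contradiction setup (the map eventually freezes at some $\map_\infty$ with an informative viewpoint still present) and both use the same triangle-inequality fact about the geodesic cost: advancing along the optimal path toward the selected viewpoint $\viewpoint_{i^*}$ decreases $\travelcost(\cdot,\viewpoint_{i^*})$ by exactly the amount moved, while $\travelcost(\cdot,\viewpoint_j)$ changes by no more. The paper exploits this \emph{qualitatively}: it shows that any viewpoint $\viewpoint_j$ with $\travelcost(\cdot,\viewpoint_j)\ge\travelcost(\cdot,\viewpoint_{i^*})$ can never overtake $\viewpoint_{i^*}$ in the selection rule \refeq{eq.target_frontier_region}, so a switch is always to a strictly closer viewpoint and, once abandoned, a viewpoint is never revisited; hence at most $m$ switches, after which the target is fixed and the purely asymptotic statement of \refasm{asm.safe_unicycle_path_following} already yields finite-time entry into the $\visibletolerance$-ball. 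You instead exploit the same fact \emph{quantitatively} through the potential $V=\max_i \info_{\map_\infty}(\frontierregion_i)/\travelcost_{\map_\infty}(\cdot,\viewpoint_i)$, which is monotone and bounded, forcing the per-period geodesic progress $d_k\to 0$; the contradiction then requires the uniform lower bound $d_k\ge\delta>0$ that you correctly flag as the main obstacle. That bound is genuinely extra work beyond \refasm{asm.safe_unicycle_path_following}, and the paper's finite-switch argument simply does not need it --- asymptotic convergence to a \emph{fixed} endpoint is enough. So your approach is sound and your handling of the $\path(\pathparam(t))$-vs-$\pos(t)$ bookkeeping is exactly right, but the paper reaches the same contradiction more economically by bounding the number of target switches rather than the rate of motion.
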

\begin{proof}
We have from \refasm{asm.consistent_occupancy_mapping} and \reflem{lem.explored_viewpoints} that the robot must visit a finite number of safe positions, each separated by at least the visibility tolerance $\visibletolerance$, in order to explore all potential safe viewpoints $\viewpoint \in \workspace$ with nonzero immediately available actionable information $\volume(\visiblefrontiers_{\map(t)}(\viewpoint))$.
In fact, a complete mapping in the sense of \refeq{eq.iscomplete} only requires visiting such potential viewpoints with a minimum information content $\mininfo$, i.e., $\volume(\visiblefrontiers_{\map(t)}(\viewpoint)) > \mininfo$.
Also, note that, due to \refasm{asm.consistent_occupancy_mapping} and \reflem{lem.explored_viewpoints}, once a viewpoint becomes uninformative as in \refeq{eq.isinformative} (i.e., $\volume(\visiblefrontiers_{\map(t)}(\viewpoint)) \leq \mininfo$), it remains uninformative for all future times (i.e., $\volume(\visiblefrontiers_{\map(t')}(\viewpoint)) \leq \mininfo$ for all $t'\geq t$).
Hence, regardless of the target frontier region selection in \refeq{eq.target_frontier_region}, exploration with persistent planning, with or without last-mile prevention (Algorithms \ref{alg.exploration_via_persistent_planning} \& \ref{alg.exploration_with_last_mile_prevention}), directly ensures the finite-time completion of mapping in the sense of \refeq{eq.iscomplete}, because there are a finite number of potentially  informative viewpoints to visit, and  under the path-following control (\refasm{asm.safe_unicycle_path_following}), the robot safely reaches the $\visibletolerance$-neighborhood of each informative viewpoint in finite time and turns them into uninformative.
Therefore, one can conclude that the mapping percentage $\mappct(\map(t)\!)$ in \refeq{eq.mapping_percentage} is non-decreasing and guaranteed to strictly increase in a finite time before mapping is completed with persistent and preventive planning.       

Similarly, the finite-time completion of exploration with online planning based on geodesic travel cost as a navigation heuristic (i.e., $\navcost = \travelcost$ in \refeq{eq.target_frontier_region}) can also be proven using the mapping percentage $\mappct(\map(t))$ as an objective function to demonstrate continuous and persistent improvement over time.
Note that, under the consistent occupancy mapping assumption (\refasm{asm.consistent_occupancy_mapping}), the mapping percentage is always non-decreasing over time, and a strictly increasing mapping percentage implies that some informative viewpoints become uninformative over time since some frontiers are explored. 
Hence, the finite-time exploration with online planning can be established by showing that the mapping percentage cannot remain constant indefinitely if mapping is incomplete.
We prove this by contradiction.
 
Suppose that $\mappct(\map(t))$ remains constant for all future times $t'\geq t$, which means that $\map(t) = \map(t')$ due to \refasm{asm.consistent_occupancy_mapping}. 
Then, the frontier regions $\frontierspace(t)=\plist{\frontierregion_1, \ldots, \frontierregion_m}$, their information contents, measured by $\info_{\map(t)}(\frontierregion_i)$, and their viewpoint $\viewpoint_i = \view_{\path(\pathparam(t)), \map(t)}(\frontierregion_i)= \view_{\pos_0, \map(t)}(\frontierregion_i)$  also remain constant. 
Consequently, according to online exploration planning in \refalg{alg.exploration_via_online_planning} and the action-aware frontier region selection in \refeq{eq.target_frontier_region}, the robot periodically selects an optimal frontier region, plans an exploration path, and moves toward the viewpoint of the selected frontier region.
Since the information utility remains the same and the path-following control is persistent, the information utility per navigation cost is nonincreasing for the selected viewpoint, and a potential transition to another viewpoint results in a strict decrease in utility per cost.
One can also observe that such a transition may only occur for a finite time and only with alternative viewpoints at a lower navigation cost, as follows. 
At time $t$, by definition \refeq{eq.target_frontier_region}, the optimal selected frontier region $\frontierregion_{i^*}$ satisfies  for any $j \neq i^*$ that 
{\small
\begin{align*}
\frac{\info_{\map(t)}(\frontierregion_j)}{\info_{\map(t)}(\frontierregion_{i^*})} \leq \frac{\travelcost_{\map(t)}(\path(\pathparam(t)), \viewpoint_j)}{\travelcost_{\map(t)}(\path(\pathparam(t), \viewpoint_{i^*})},
\end{align*}
}%
where the left hand side (i.e., the ratio of the information utility of frontier regions) stays constant over time.
If the exploration path is optimal towards $\viewpoint_{i^*}$, i.e., $\path = \optimalpath_{\map(t)}(\path(0), \viewpoint_i^*)$ and keep safely and persistently followed (see \refasm{asm.safe_unicycle_path_following}), then we have the geodesic distance decay towards $\viewpoint_{i^*}$ for any future time $t'\geq t$ with the following property
{\scriptsize
\begin{align*}
&\frac{\travelcost_{\map(t)}(\path(\pathparam(t)), \viewpoint_j)}{\travelcost_{\map(t)}(\path(\pathparam(t)), \viewpoint_{i^*})} \geq 1
\\
& \quad  \Longrightarrow \frac{\travelcost_{\map(t')}(\path(\pathparam(t')), \viewpoint_j)}{\travelcost_{\map(t')}(\path(\pathparam(t')), \viewpoint_{i^*})} \geq \frac{\travelcost_{\map(t)}(\path(\pathparam(t)), \viewpoint_j)}{\travelcost_{\map(t)}(\path(\pathparam(t)), \viewpoint_{i^*})}.
\end{align*}  
}%
Hence, the robot may change its optimal frontier region selection along the way if it finds a better frontier region with a strictly lower information utility per travel cost and at a strictly lower travel cost. 
Since the number of such transitions is at most equal to the number of frontier regions, the robot, in finite time, either exhausts all potential frontier regions or converges to the $\visibletolerance$-neighborhood of its frontier region viewpoint.
This contradicts the assumption that $\mappct(\map(t))$ remains constant, since reaching the $\visibletolerance$-neighborhood of an informative viewpoint in finite time ensures a strict increase in the mapping percentage $\mappct(\map(t))$, which completes the proof. 
\end{proof}

\section{Numerical Simulations \& Experiments}
\label{sec.numerical_simulations_physical_experiments}

In this section, we evaluate the effectiveness of our action-aware active exploration framework through numerical simulations%
\footnote{In ROS-Gazebo numerical simulations, we use a circular unicycle mobile robot with a body radius of 0.35m, equipped with a 2D 360$^\circ$ laser range finder sensor that generates 360 samples with a maximum range of 1.5m at 10Hz.
The robot's pose is updated at 10Hz using a simulated motion capture system, and the robot is controlled using the unicycle-path-following controller in \refeq{eq.SafePathFollowing2} at 10Hz with path following coefficients of  $\gain_\safelevel = 1.0$ and $\gain_\pathparam = 1$, robot control gains of $\lingain = 1$ and $\anggain = 2$, and a maximum linear and angular velocity of 1.0\,m/s and 1.0\,rad/s, respectively.} 
and physical experiments. 
We systematically investigate the roles of the information measure (\reftab{tab.information_measures_for_active_exploration}), navigation cost measure (\reftab{tab.navigation_cost_measures}), and exploration planning strategy (Algorithm \ref{alg.exploration_via_persistent_planning}-\ref{alg.exploration_via_online_planning}) in mapping efficiency, quantified as the mapping percentage relative to the distance traveled.


\begin{figure}[t]
    \centering
    \begin{tabular}{@{}c@{\hspace{0.01\columnwidth}}c@{}}
    \raisebox{.005\height}{\includegraphics[width = 0.295\columnwidth]{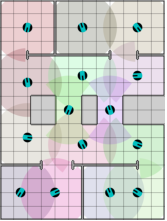}} &
    \includegraphics[width = 0.695\columnwidth]{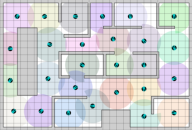}
    \\[-1mm]
    \end{tabular}
    \vspace{-2mm}
    \caption{Initial robot poses for safe exploration and mapping in numerical simulations using (left) 12m$\times$16m and (right) 28m$\times$19m office-like environments, where the robot's sensing regions are depicted as colored patches.
    }
    \label{fig.initial_conditions}
    \vspace{-1.5\baselineskip}
    \end{figure}

\begin{figure}[t]
    \centering
    \begin{tabular}{@{}c@{\hspace{0.005\columnwidth}}c@{\hspace{0.004\columnwidth}}c@{}}
        \rotatebox{90}{\parbox{0.35\columnwidth}{\centering \scriptsize{Information Only} }} &
        \includegraphics[width = 0.36\columnwidth, angle = 90]{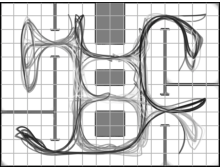} &
        \includegraphics[width = 0.36\columnwidth, angle = 90] {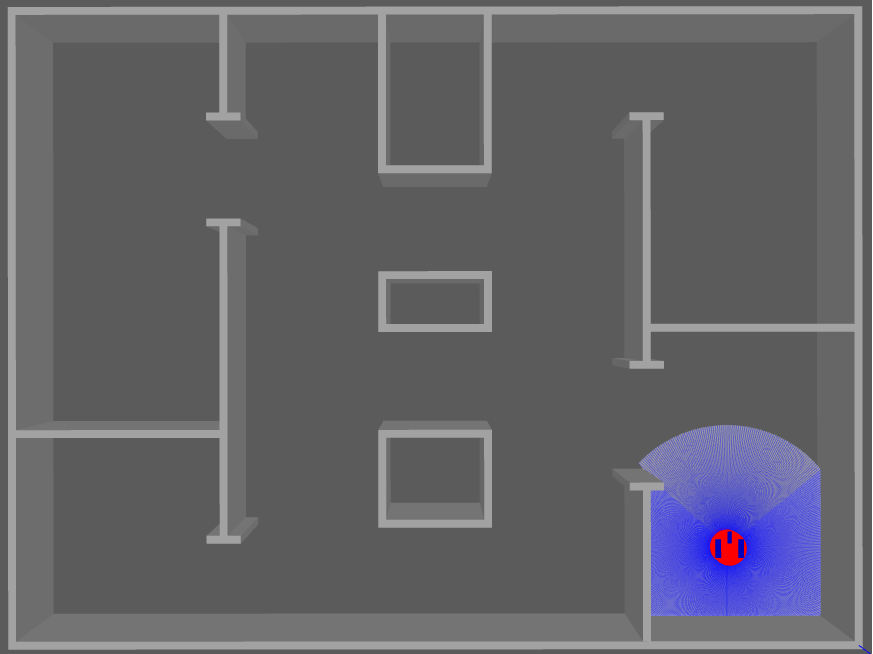}
    \\[-1mm]
        \rotatebox{90}{\parbox{0.35\columnwidth}{\centering \scriptsize{Uniform Info} }} &
        \includegraphics[width = 0.36\columnwidth, angle = 90]{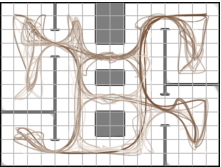} &
        \includegraphics[width = 0.36\columnwidth, angle = 90]{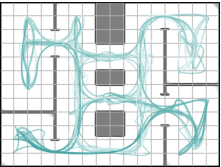}
    \\[-1mm]
        \rotatebox{90}{\parbox{0.35\columnwidth}{\centering \scriptsize{Frontier Info} }} &
        \includegraphics[width = 0.36\columnwidth, angle = 90]{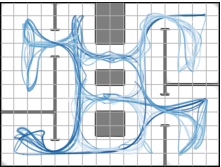} &
        \includegraphics[width = 0.36\columnwidth, angle = 90]{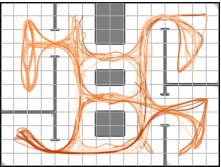} 
    \\[-2mm]
        \rotatebox{0}{\centering \scriptsize{\phantom{.}} }&
        \rotatebox{0}{\parbox{0.36\columnwidth}{\centering \scriptsize{Euclidean Distance} }} &
        \rotatebox{0}{\parbox{0.36\columnwidth}{\centering \scriptsize{Geodesic Distance} }}
    \\
        \rotatebox{0}{\centering \scriptsize{\phantom{.}} } &
        \includegraphics[width = 0.4\columnwidth]{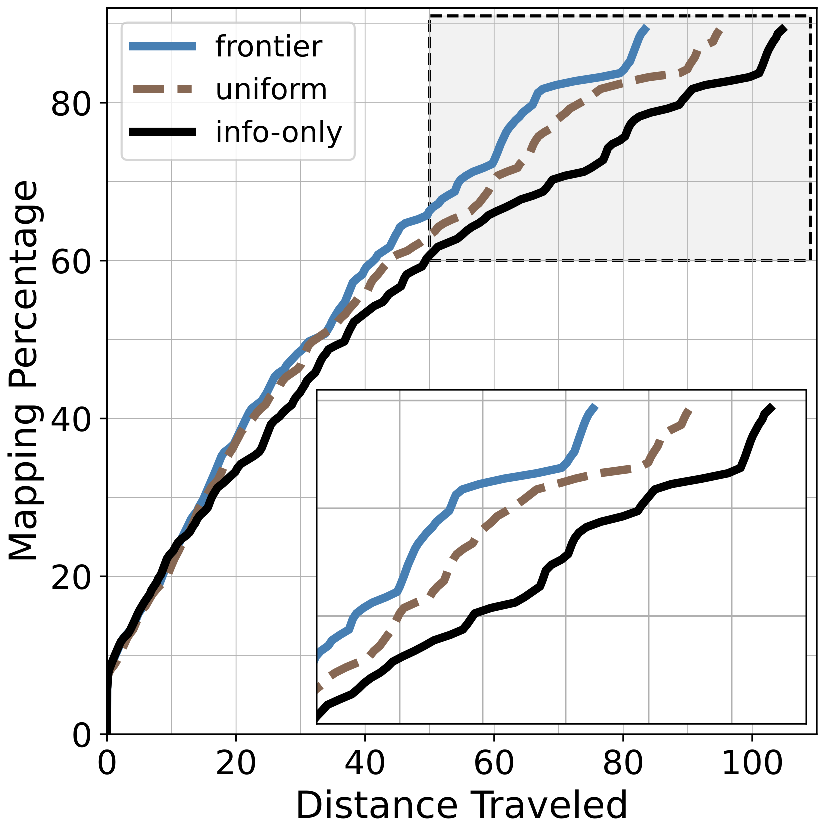} &
        \includegraphics[width = 0.4\columnwidth]{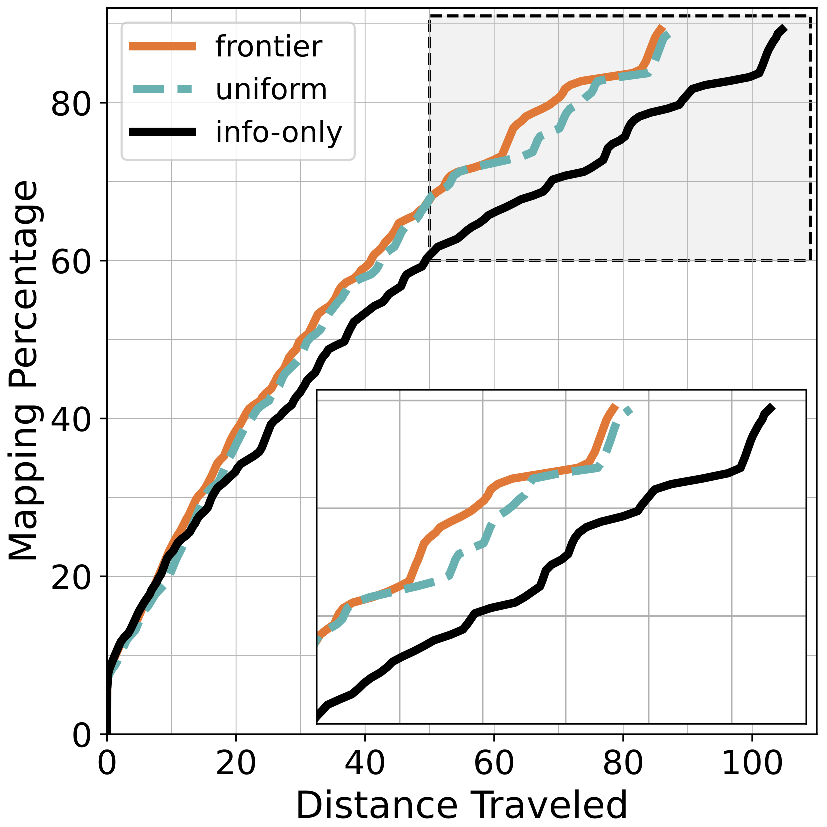}
    \\[-3mm]
    \end{tabular}
    \vspace{-1mm}
    \caption{Unicycle robot motion while exploring and mapping a 12m$\times$16m simulated Gazebo environment with a laser scanner, using persistent planning with different information utilities (top: information-only, middle: uniform information, bottom: frontier size) and navigation costs (left: Euclidean distance, right: geodesic distance). 
Mapping percentage versus distance traveled (bottom) shows that action-aware exploration with geodesic distance outperforms alternatives, and information utility supports the less accurate Euclidean distance but is less influential for geodesic distance. 
}
    \label{fig.effect_of_information_measure}
    \vspace{-3mm}
    \end{figure}

\subsection{Effect of Information Measure on Active Exploration}

We first investigate how uniform and nonuniform-frontier-based information measures in \reftab{tab.information_measures_for_active_exploration} influence exploration efficiency under persistent planning (\refalg{alg.exploration_via_persistent_planning}) using different navigation cost measures (uniform, Euclidean, and geodesic in \reftab{tab.navigation_cost_measures}). 
We consider a 12m$\times$16m office-like environment in Gazebo numerical simulations with a set of relatively uniform and minimally overlapping initial poses, as illustrated in \reffig{fig.initial_conditions} (left) and (see \reffig{fig.effect_of_information_measure}, top-right).
\reffig{fig.effect_of_information_measure} presents the resulting robot position trajectories for all exploration trials over the constructed average occupancy grid map, where darker path segments indicate longer distances traveled.
As expected from \refprop{prop.finite_time_mapping}, exploration with persistent planning completes mapping in finite time regardless of initial robot poses, information measures, or navigation cost selections;
however, exploration efficiency may vary significantly in terms of the mapping percentage over the distance traveled.
As observed in \reffig{fig.effect_of_information_measure} (bottom), exploration using only the information measure (with a constant uniform navigation cost, ignoring the role and cost of actions in exploration) is the slowest and performs the worst in terms of exploration efficiency,
as also indicated by the longer darker trajectories in \reffig{fig.effect_of_information_measure} (top-left).
On the other hand, action-aware exploration with uniform or nonuniform information measures always reduces the average travel distance needed to complete mapping, \reffig{fig.effect_of_information_measure}\,(bottom).
In particular, exploration with the Euclidean navigation cost noticeably benefits from the frontier-size-based information measure compared to the uniform measure, as it reduces the travel distance required to complete mapping (see \reffig{fig.effect_of_information_measure}, bottom-left), suggesting that an informative information measure helps compensate for inaccuracies in travel cost estimation.
Exploration with a more accurate travel cost estimation using the geodesic distance shows negligible  difference for uniform and frontier-size-based information measures, as seen in \reffig{fig.effect_of_information_measure} (bottom-right).
Overall, we observe that an informative information measure positively contributes to the performance of action-aware exploration;  
however, the accuracy of the estimated navigation cost has a significant influence in exploration efficiency. 
This can be explained by the fact that the robot inevitably needs to visit certain parts of the environment in order to complete mapping, regardless of the amount of information available.

\begin{figure}[t]
    \centering
    \begin{tabular}{@{}c@{\hspace{0.005\columnwidth}}c@{\hspace{0.005\columnwidth}}c@{\hspace{0.005\columnwidth}}c@{}}
        \rotatebox{90}{\phantom{label}} &
        \rotatebox{0}{\parbox{0.3\columnwidth}{\centering \scriptsize{Persistent Planning} }} &
        \rotatebox{0}{\parbox{0.3\columnwidth}{\centering \scriptsize{Preventive Planning} }} &
        \rotatebox{0}{\parbox{0.3\columnwidth}{\centering \scriptsize{Online Planning} }}
    \\[-1mm]
        \rotatebox{90}{\parbox{0.46\columnwidth}{\centering \scriptsize{Euclidean Navigation Cost} }} &
        \includegraphics[width = 0.46\columnwidth, angle = 90]{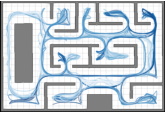} &
        \includegraphics[width = 0.46\columnwidth, angle = 90]{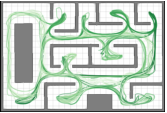} &
        \includegraphics[width = 0.46\columnwidth, angle = 90]{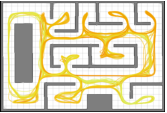}
    \\[-1mm]
        \rotatebox{90}{\parbox{0.46\columnwidth}{\centering \scriptsize{Geodesic Navigation Cost} }} &
        \includegraphics[width = 0.46\columnwidth, angle = 90]{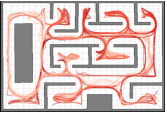} &
        \includegraphics[width = 0.46\columnwidth, angle = 90]{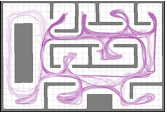} &
        \includegraphics[width = 0.46\columnwidth, angle = 90]{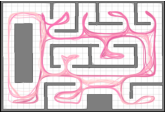}
    \\[-1mm]
        \rotatebox{90}{\parbox{0.305\columnwidth}{\phantom{Rotated Label}}} &
        \includegraphics[width = 0.305\columnwidth]{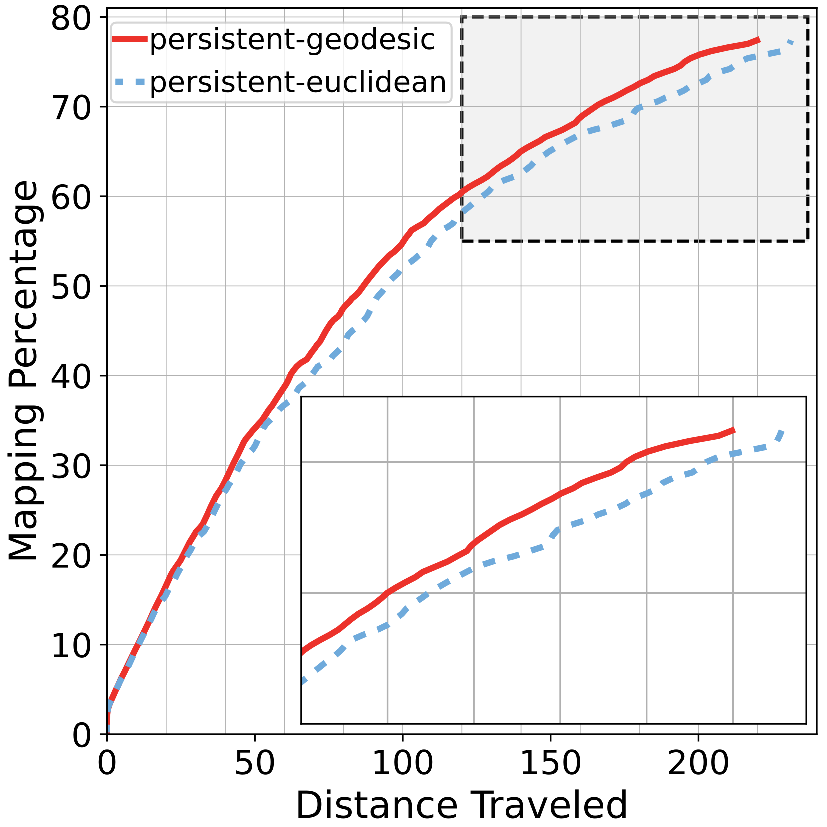}  &
        \includegraphics[width = 0.305\columnwidth]{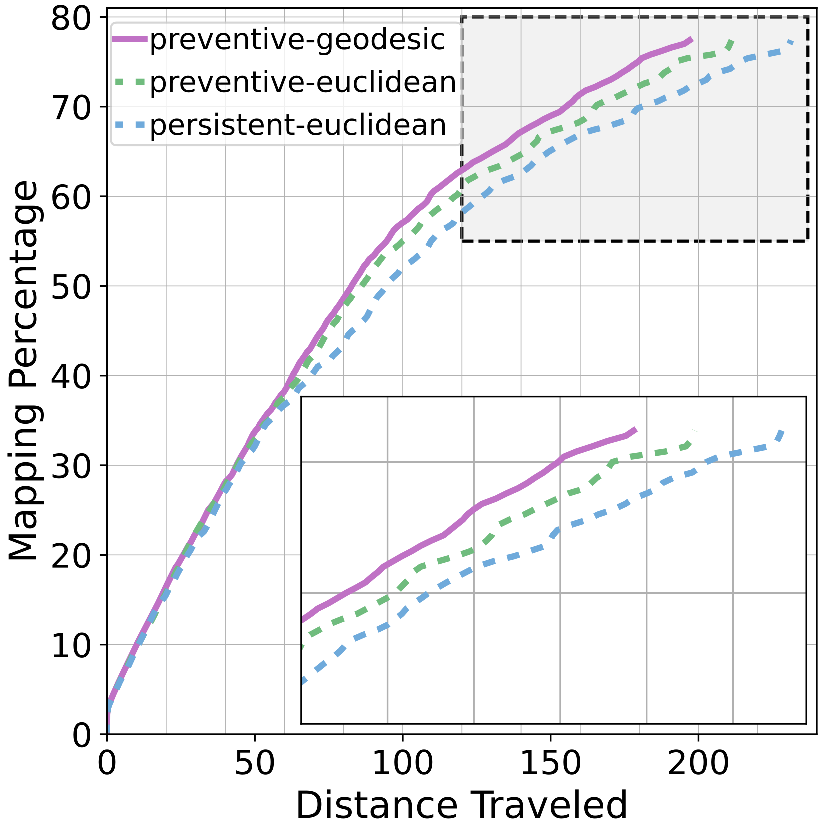}  &
        \includegraphics[width = 0.305\columnwidth]{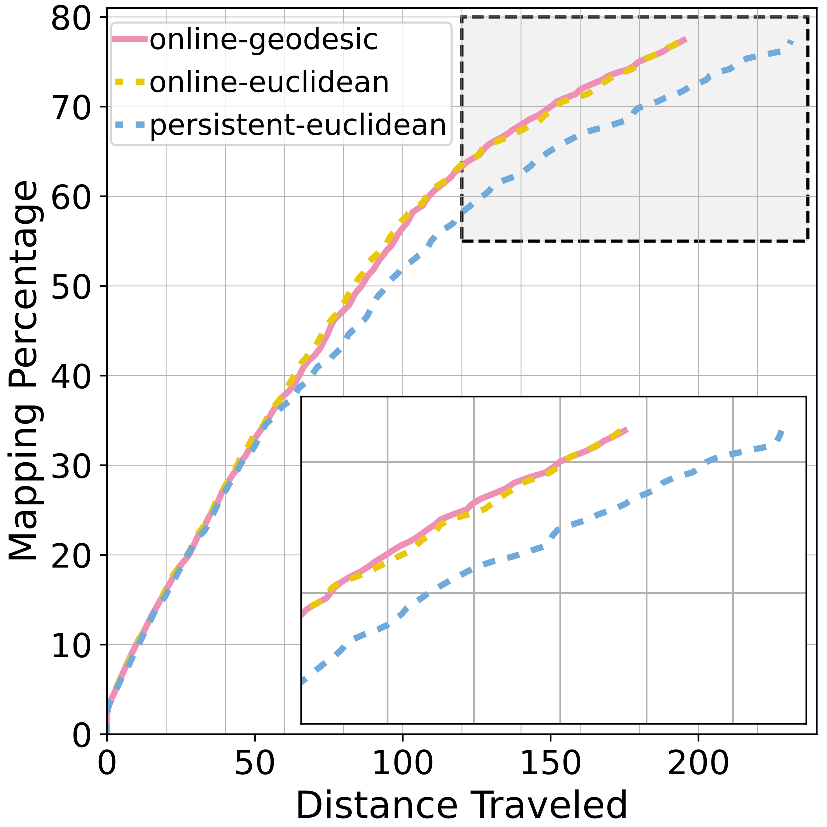}
    \\[-2mm]
    \end{tabular}
    \caption{Unicycle robot motion while exploring and mapping a 28m$\times$19m office-like simulated environment based on the frontier-size base information utility, using different planning (left: persistent, middle: preventive, right: online) and navigation costs (upper: Euclidean distance, lower: geodesic distance). 
Mapping percentage versus distance traveled (bottom) shows that planning with geodesic distance almost always outperforms Euclidean distance, and last-mile preventive planning using geodesic distance offers the best adaptability, computational cost, and exploration efficiency.     
%
    }
    \vspace{-2mm}
    \label{fig.effect_of_navigation_cost_and_replanning_strategy}
    \end{figure}

\subsection{Effect of Navigation Cost and Pro-Active Replanning}

\begin{figure*}[!t]
    \centering
    \begin{tabular}{@{}c@{\hspace{0.01\columnwidth}}c@{\hspace{0.01\columnwidth}}c@{\hspace{0.01\columnwidth}}c@{\hspace{0.01\columnwidth}}c@{}}
    \includegraphics[width = 0.195\textwidth]{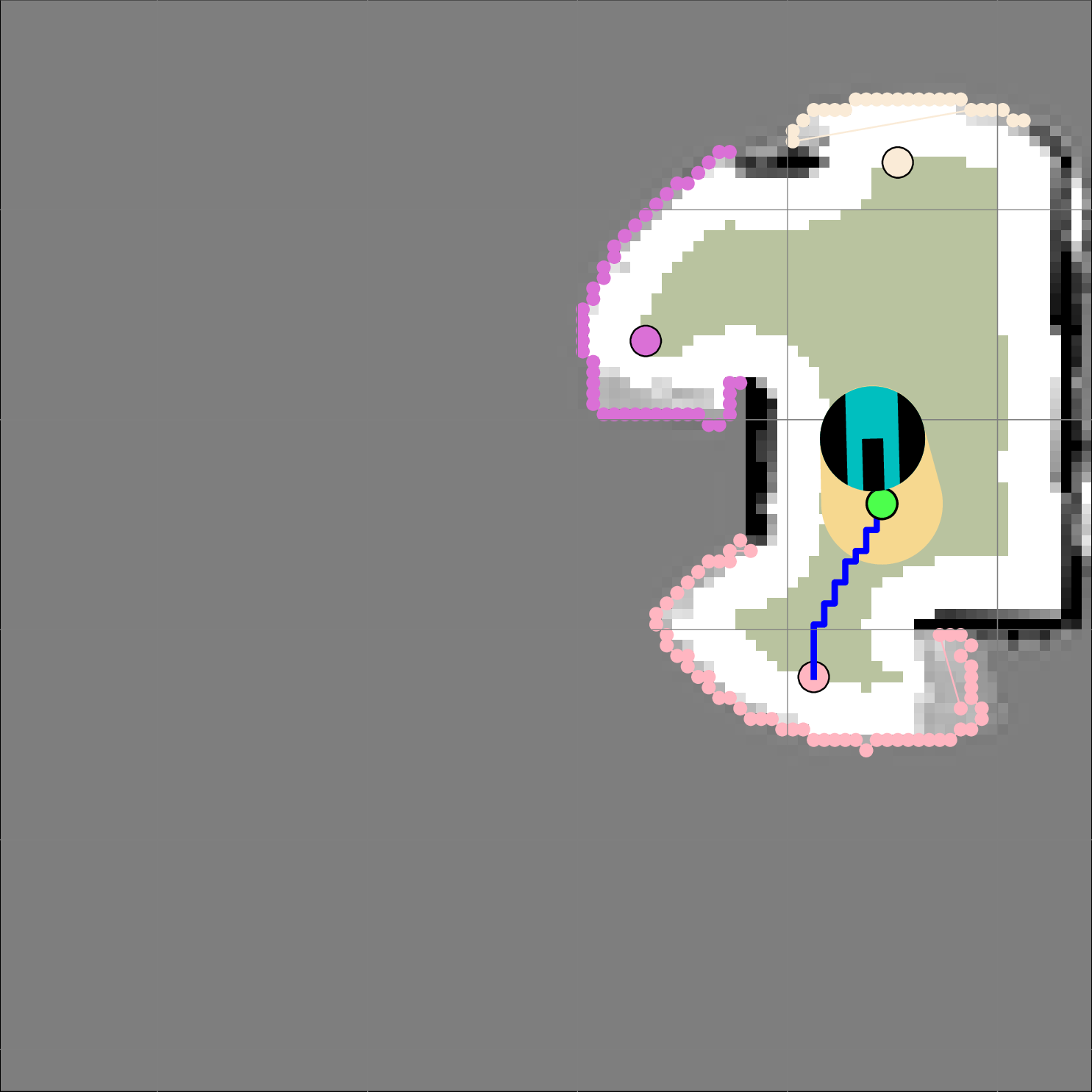} &
    \includegraphics[width = 0.195\textwidth]{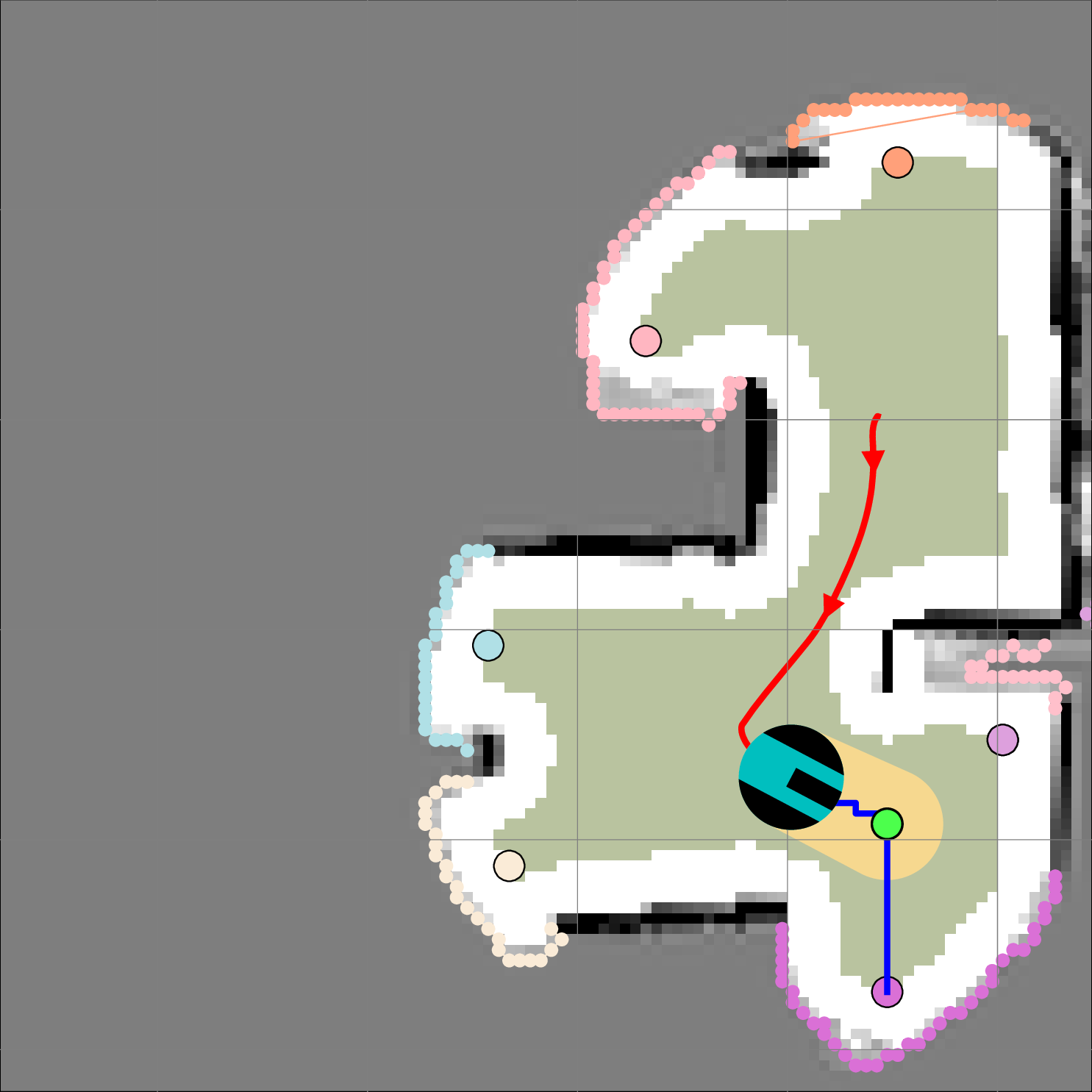} &
    \includegraphics[width = 0.195\textwidth]{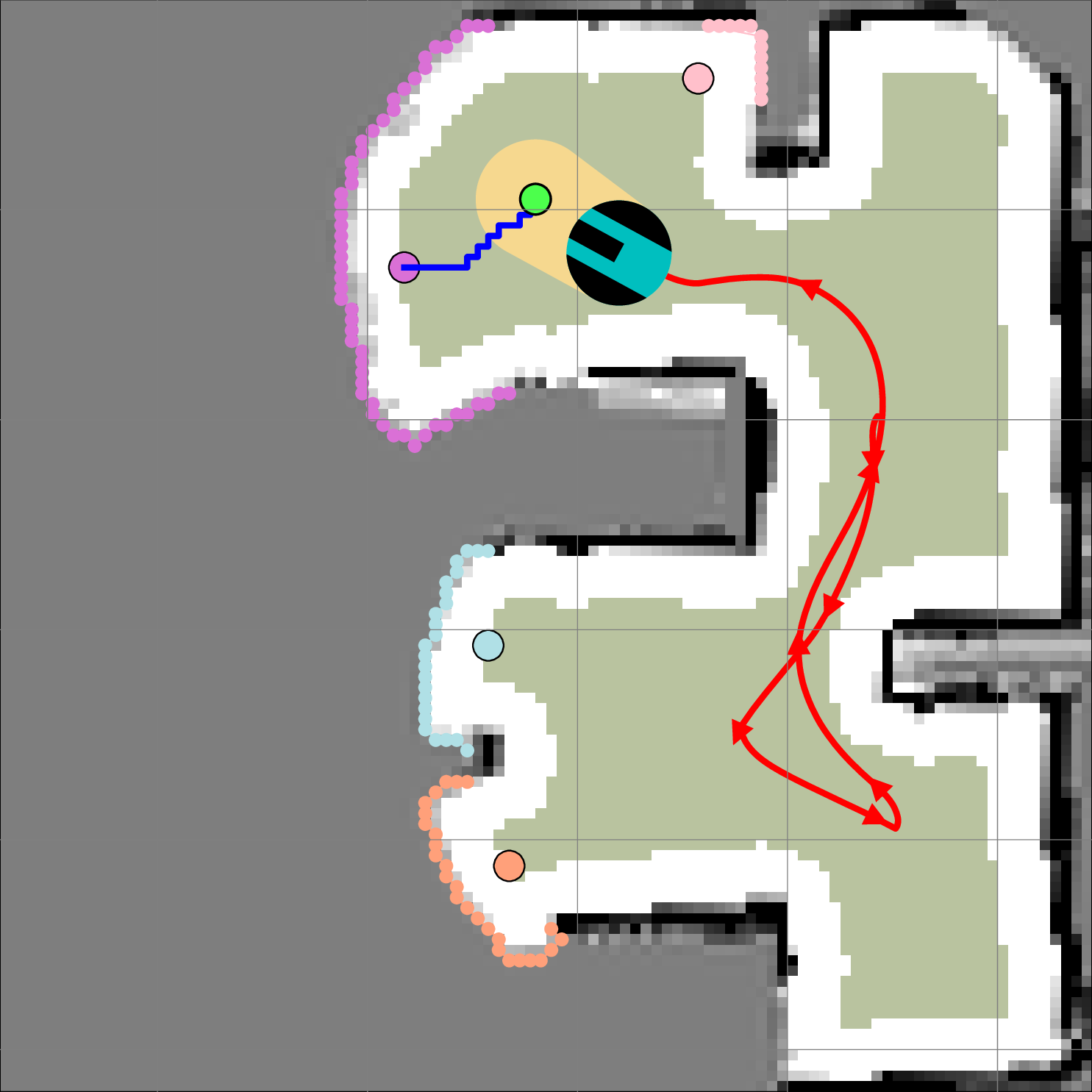} &
    \includegraphics[width = 0.195\textwidth]{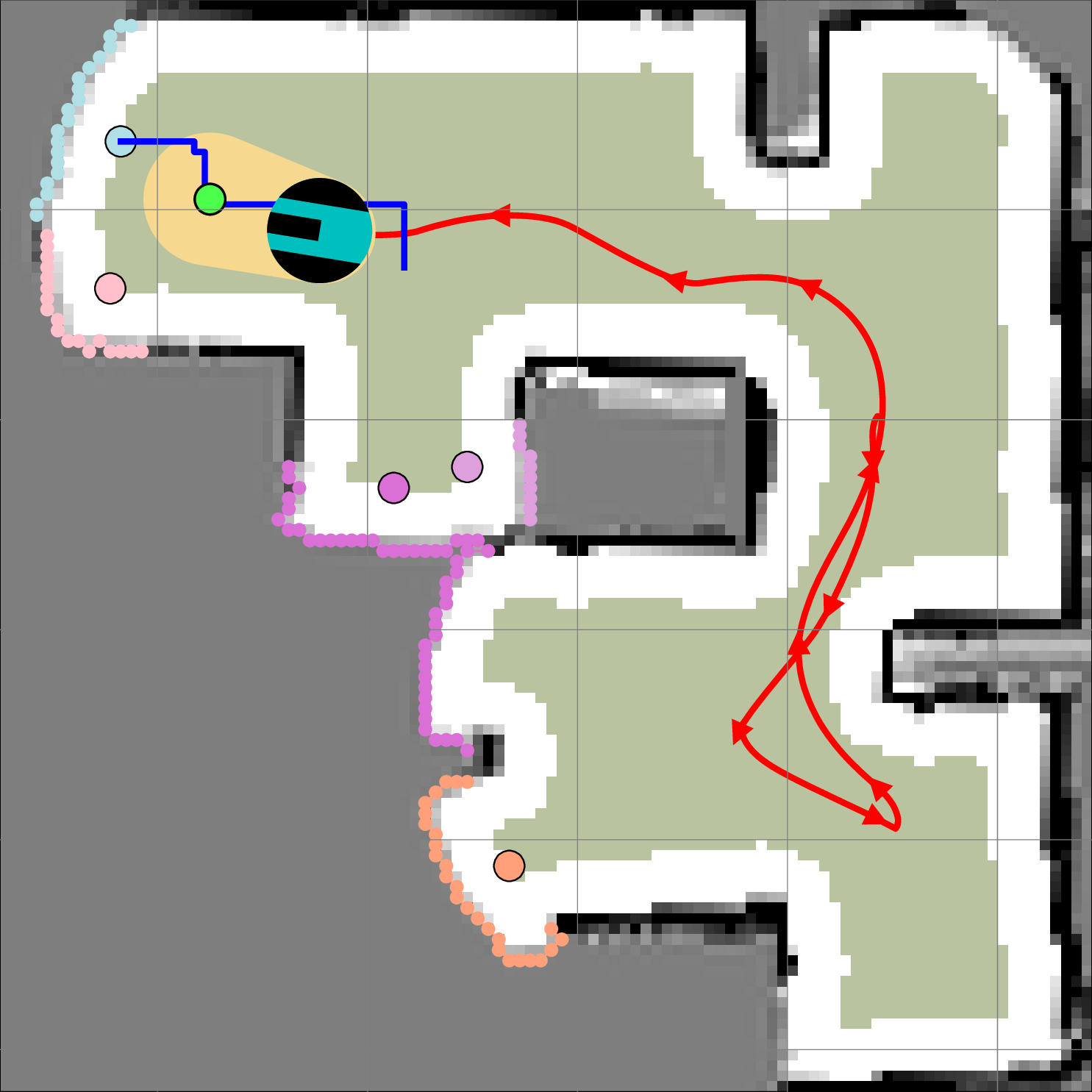} &
    \includegraphics[width = 0.195\textwidth]{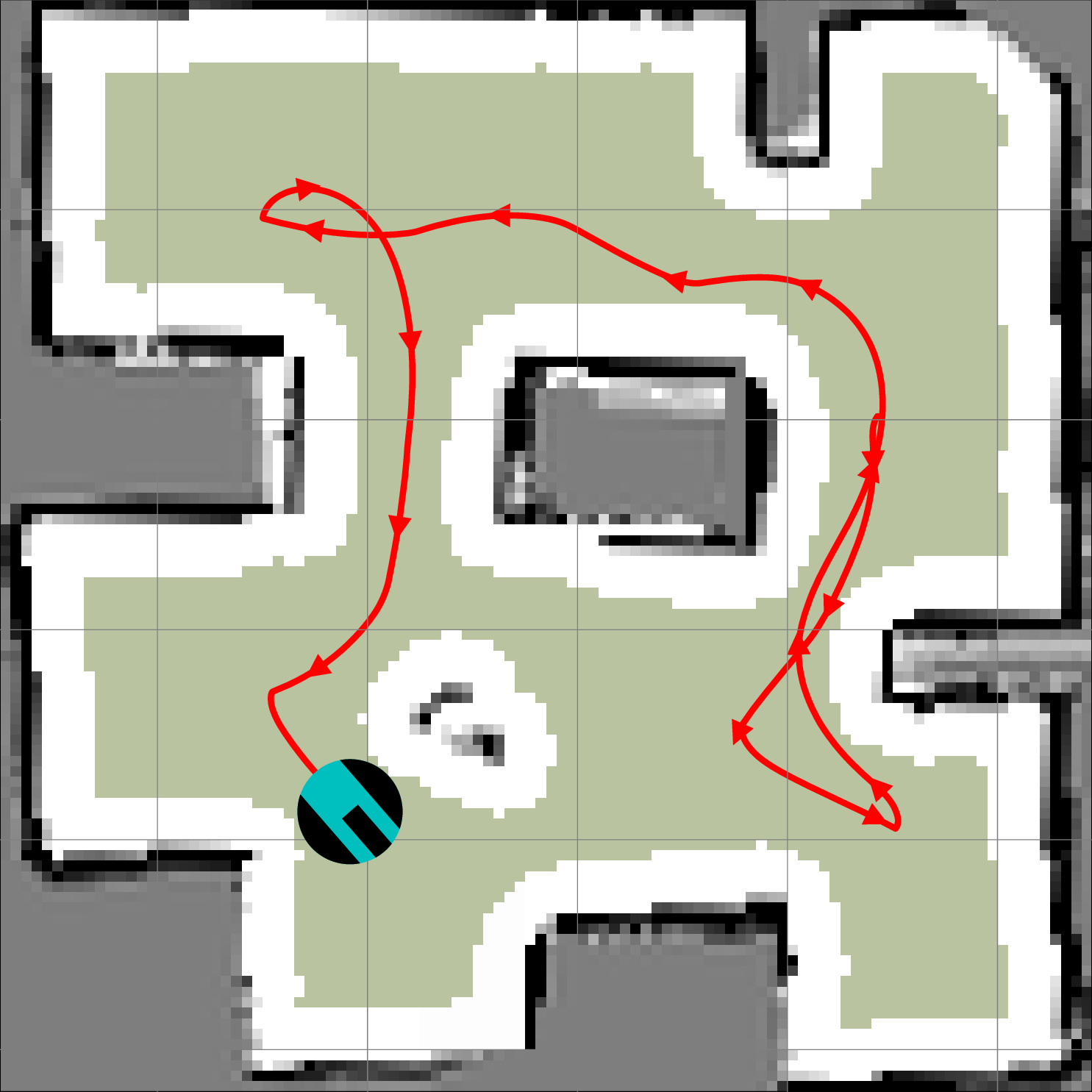}
    \\
    \includegraphics[width = 0.195\textwidth]{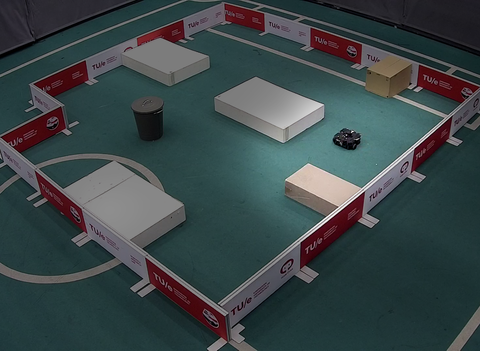} &
    \includegraphics[width = 0.195\textwidth]{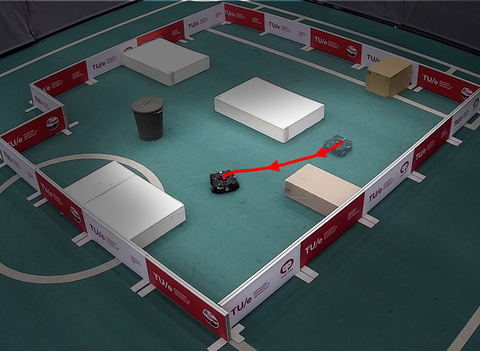} &
    \includegraphics[width = 0.195\textwidth]{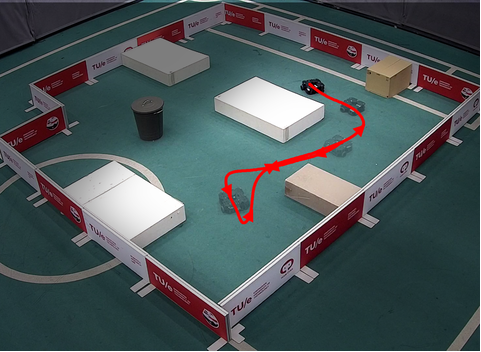} &
    \includegraphics[width = 0.195\textwidth]{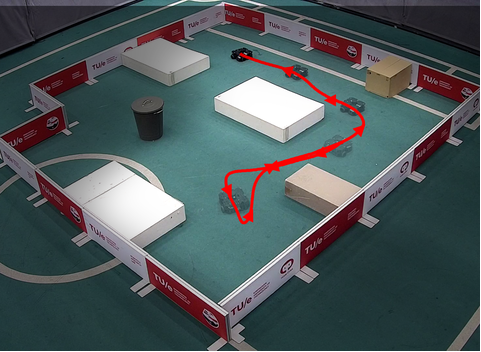} &
    \includegraphics[width = 0.195\textwidth]{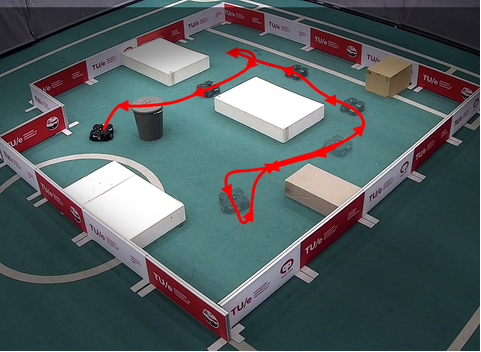}
    \\
    \includegraphics[width = 0.195\textwidth]{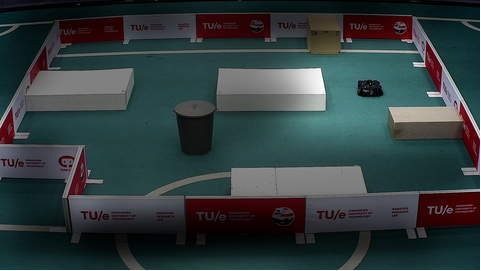} &
    \includegraphics[width = 0.195\textwidth]{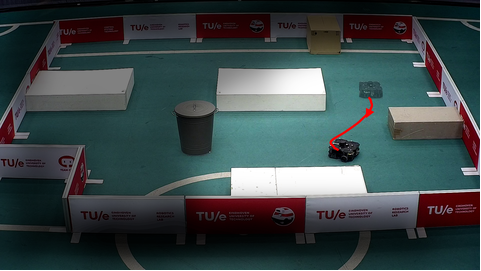} &
    \includegraphics[width = 0.195\textwidth]{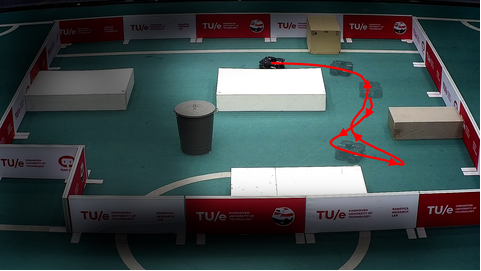} &
    \includegraphics[width = 0.195\textwidth]{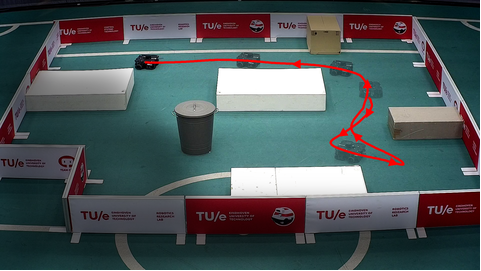} &
    \includegraphics[width = 0.195\textwidth]{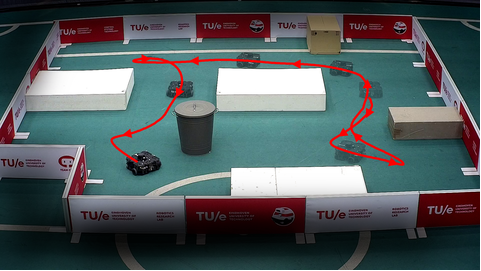}
    \\[-2mm]
    \end{tabular}
    \caption{Example snapshots of a TurtleBot3 Waffle Pi robot safely and effectively exploring and mapping a 6m$\times$6m cluttered lab environment using last-mile preventive replanning with geodesic navigation cost and frontier-size-based information utility. 
The robot's motion (red line) safely follows the reference path (blue line) toward the target viewpoint (colored circle) of a selected frontier region (colored lines), using feedback motion prediction (yellow patch) to reach the local control goal (green circle) on the path.
    }
    \label{fig.physical_experiments_sequential_exploration}
    \vspace{-2mm}
\end{figure*}

\begin{figure}[t]
    \centering
    \begin{tabular}{@{}c@{\hspace{0.005\columnwidth}}c@{\hspace{0.005\columnwidth}}c@{\hspace{0.005\columnwidth}}c@{}}
            \rotatebox{90}{\parbox{0.315\columnwidth}{\centering \scriptsize{Persistent Replanning} }} &
            \includegraphics[width = 0.315\columnwidth]{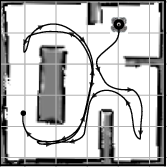} &
            \includegraphics[width = 0.315\columnwidth]{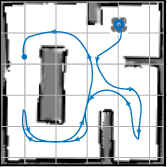} &
            \includegraphics[width = 0.315\columnwidth]{figures/expphysical_map1_withrobot_motionpatterns_persistent_infogeodesic.eps}    
        \\[-2mm]
            \rotatebox{0}{\centering \scriptsize{\phantom{.}} } &
            \rotatebox{0}{\parbox{0.315\columnwidth}{\centering \scriptsize{Purely Info-Driven} }} &
            \rotatebox{0}{\parbox{0.315\columnwidth}{\centering \scriptsize{Euclidean Cost} }} &
            \rotatebox{0}{\parbox{0.315\columnwidth}{\centering \scriptsize{Geodesic Cost} }}      
        \\[-2mm]
    \end{tabular}
    \caption{Example TurtleBot3 robot trajectories for exploration with persistent planning using frontier-size information utility and different navigation costs: (left) uniform, (middle) Euclidean, and (right) geodesic distance.}
    \vspace{-2mm}
    \label{fig.physical_experiments_persistent_planning}
    \end{figure}

\begin{figure}[t]
    \centering
    \begin{tabular}{@{}c@{\hspace{0.005\columnwidth}}c@{\hspace{0.005\columnwidth}}c@{\hspace{0.005\columnwidth}}c@{}}
            \rotatebox{90}{\parbox{0.315\columnwidth}{\centering \scriptsize{Geodesic Cost} }} &
            \includegraphics[width = 0.315\columnwidth]{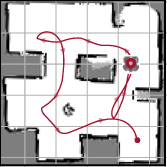} &
            \includegraphics[width = 0.315\columnwidth]{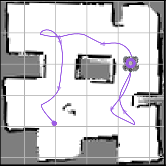} &
            \includegraphics[width = 0.315\columnwidth]{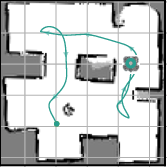}
        \\[-2mm]
            \rotatebox{0}{\centering \scriptsize{\phantom{.}} } &
            \rotatebox{0}{\parbox{0.315\columnwidth}{\centering \scriptsize{Persistent Planning} }} &
            \rotatebox{0}{\parbox{0.315\columnwidth}{\centering \scriptsize{Preventive Planning} }} &
            \rotatebox{0}{\parbox{0.315\columnwidth}{\centering \scriptsize{Online Planning} }}   
        \\[-2mm]
    \end{tabular}
    \caption{Example TurtleBot3 exploration trajectories for maximum frontier-size information utility per geodesic navigation cost with different planning strategies: (left) persistent, (middle) preventive, and (right) online planning.   
    }
    \vspace{-2mm}
    \label{fig.physical_experiments_geodesic_cost}
    \end{figure}

In this section, we consider the frontier-size-based information measure due to its positive influence on exploration and study the two other key factors, the navigation cost measure and the exploration planning strategy, on exploration performance, in a larger 28m$\times$19m office-like simulated environment, see \reffig{fig.initial_conditions}\,(right).
We present in \reffig{fig.effect_of_navigation_cost_and_replanning_strategy} the resulting robot trajectories for exploration with persistent planning, last-mile preventive planning, and online planning, using both Euclidean and geodesic navigation costs, as well as the mapping percentage over the traveled distance for each exploration strategy.
As expected, compared to persistent planning, exploration with last-mile preventive planning completes mapping faster with a shorter total travel distance, as the robot recognizes and avoids unnecessary exploration motion deep into the farthest corners of room-like regions, as seen in \reffig{fig.effect_of_navigation_cost_and_replanning_strategy}(middle).   
Exploration with online planning inherits and further extends the last-mile prevention property by leveraging the latest map information at the highest level of adaptability and results in less cluttered smoother robot trajectories, as seen in \reffig{fig.effect_of_navigation_cost_and_replanning_strategy} (right), but at the cost of higher computational cost due to periodic online replanning.
Also note that online exploration planning with the Euclidean distance may exhibit livelocks, where the robot continuously moves and switches between different frontier viewpoints without updating the map.
We observe that, regardless of the planning strategy, exploration with the geodesic distance performs better (or at least no worse) than with the Euclidean distance, since the geodesic navigation cost provides a more accurate estimate of the actual travel cost compared to the straight-line Euclidean distance.
Overall, compared to persistent and online planning, we conclude that exploration with last-mile preventive planning, using the frontier-size information measure and geodesic navigation cost in \reffig{fig.effect_of_navigation_cost_and_replanning_strategy} (middle, bottom), offers the best exploration performance at the lowest computational cost  without frequent replanning.

\subsection{Experimental Validation} 

For experimental validation of the proposed active exploration framework in practice, we conduct physical experiments using a differential-drive TurtleBot3 Waffle Pi platform equipped with a 2D $360^\circ$ laser scanner moving in  6m$\times$6m cluttered lab environments\footnote{The TurtleBot3 Waffle Pi robot platform has a body radius of 0.22m with respect to the motion center of its differential drive wheels and is equipped with a 2D $360^\circ$ LiDAR range scanner (LDS-01) generating 360 samples at 5Hz with a thresholded maximum sensing range of 1.5m.
The robot's pose is obtained from an OptiTrack motion capture system at 30Hz, and is controlled using the unicycle-path-following controller \refeq{eq.SafePathFollowing2} at 10Hz with path following coefficients of $\gain_\safelevel = 1$ and $\gain_\pathparam = 1$, robot control gains of $\lingain = 1$ and $\anggain = 2$, at a maximum linear velocity of 0.26m/s and a maximum angular velocity of 0.8rad/s.}, tracked by an OptiTrack motion capture system for localization, see \reffig{fig.physical_experiments_sequential_exploration}.
We first consider exploration with persistent planning using uniform, Euclidean, and geodesic navigation costs in \reffig{fig.physical_experiments_persistent_planning}.
Aligned with our observations in numerical simulations, the example robot trajectories in \reffig{fig.physical_experiments_persistent_planning} demonstrate that action-aware exploration reduces travel distance compared to purely information-driven exploration (with uniform navigation cost), and exploration with the geodesic distance completes mapping with a shorter total travel distance than with the Euclidean distance.
Because purely information-driven exploration switches between different viewpoints prematurely without fully exploring a region, and the Euclidean distance misguides the robot regarding the actual navigation cost to reach a viewpoint.
Secondly, we consider different exploration planning strategies using the frontier-size-based information measure and the geodesic navigation cost in \reffig{fig.physical_experiments_geodesic_cost}.
As in the numerical simulations, we observe a strong similarity between exploration with last-mile preventive and online planning, both of which significantly outperform persistent planning due to their adaptive nature.  
Finally, in \reffig{fig.physical_experiments_sequential_exploration}, we present snapshots of the TurtleBot3 robot incrementally mapping an unknown environment (from left to right) using the proactive exploration strategy with last-mile preventive planning, utilizing the frontier-size-based information measure and the geodesic navigation distance.
The robot effectively and safely explores and maps the environment while avoiding collisions and preventing deadlock and livelock situations.
In summary, the physical experiments confirm our numerical observations and validate the effectiveness of our proposed action-aware proactive exploration framework in practice.

\section{Conclusions}
\label{sec.conclusions}

In this paper, we present an action-aware proactive exploration framework for autonomous mobile robots to safely, reliably, and efficiently perform exploration and mapping in complex unknown environments.
We achieve this by tightly coupling and bridging the gap between perception and action in three layers of exploration: 
i) we optimally design safe and informative exploration paths, simultaneously minimizing the risk of collision and the distance to unexplored regions,
ii) we select safely reachable informative viewpoints with maximum information utility per total navigation cost,
iii) we preventively terminate and replan an exploration plan if a selected viewpoint does not provide enough actionable information.
Additionally, we ensure the safe execution of an exploration plan by systematically applying verifiably safe and correct path-following control, as well as identifying potential discrepancies between planning and control early in the design of maximal clearance safe reference paths in planning. 
In both numerical simulation and hardware experiments, we observe that action-aware proactive exploration with last-mile preventive planning offers the best balance of adaptivity, computational efficiency, and exploration efficiency, compared to persistent and online planning strategies.
We conclude that the cost of navigation is more influential in robotic exploration for occupancy mapping using dense and accurate (e.g., LiDAR) range sensors than the information utility, since all informative frontier regions, irrespective of their information content, eventually need to be observed to complete the mapping. 
Hence, this raises questions about the effectiveness of information-only exploration, especially in large environments.

Our current work focuses on multi-viewpoint planning for action-aware active exploration using approximate traveling salesman methods and the systematic combination of reactive and proactive planning in exploration.
Another promising research direction is action-aware exploration for simultaneous mapping and localization that minimizes the cost of action while reducing the uncertainty in mapping and localization, particularly in dynamic environments.
 

%
%

\bibliographystyle{IEEEtran}
\bibliography{references}


\end{document}